\documentclass{article} % For LaTeX2e

 \usepackage[preprint]{icml2026}

\usepackage[utf8]{inputenc} % allow utf-8 input
\usepackage[T1]{fontenc}    % use 8-bit T1 fonts

\usepackage{hyperref}

\usepackage{url}            % simple URL typesetting
\usepackage{booktabs}       % professional-quality tables
\usepackage{amsfonts}       % blackboard math symbols
\usepackage{nicefrac}       % compact symbols for 1/2, etc.
\usepackage{microtype}      % microtypography
\usepackage{xcolor}         % colors
\usepackage{graphicx}

\usepackage{amsmath}
\usepackage{amssymb}
\usepackage{mathtools}
\usepackage{amsthm}

\usepackage[capitalize,noabbrev]{cleveref}

\usepackage{thmtools}
\usepackage{thm-restate}

\theoremstyle{plain}
\newtheorem{theorem}{Theorem}[section]

\theoremstyle{definition}

\theoremstyle{remark}
\newtheorem{remark}[theorem]{Remark}

\usepackage[textsize=tiny]{todonotes}

\usepackage{accents}
\newcommand{\ubar}[1]{\underaccent{\bar}{#1}}

\usepackage{wrapfig}
\usepackage{dsfont}
\usepackage{bm}
\usepackage{etoolbox}

\newcommand{\mbf}[1]{\mathbf{#1}}

\newcommand{\zb}{\mbf{z}}

\newcommand{\bb}{\mbf{b}}

\newcommand{\ubhat}{\hat{\mbf{u}}}
\newcommand{\lbhat}{\hat{\mbf{l}}}
\newcommand{\xb}{\mbf{x}}

\newcommand{\thetab}{\bm{\theta}}

\DeclareMathOperator*{\argmax}{\arg\!\max}

\newcommand{\wcloss}{\mathcal{L}_{f_{\thetab}}^{\mathcal{C}_{\epsilon}} (\xb, y)}
\newcommand{\advloss}{\ubar{\mathcal{L}}_{f_{\thetab}}^{\mathcal{C}_{\epsilon}} (\xb, y)}
\newcommand{\verloss}{\bar{\mathcal{L}}_{f_{\thetab}}^{\mathcal{C}_{\epsilon}} (\xb, y)}
\newcommand{\zdifflb}{\ubar{\zb}_{f_{\thetab}}^{\mathcal{C}_{\epsilon}}(\xb, y)}
\newcommand{\zdiffcc}{\hspace{-3pt}{\ }^{\text{CC}}\zb_{f_{\thetab}}^{\mathcal{C}_{\epsilon}}(\alpha; \xb, y)}
\newcommand{\zdiffopt}{\zb_{f_{\thetab}}^{\mathcal{C}_{\epsilon}}(\xb, y)}
\newcommand{\ccibp}{\hspace{-3pt}{\ }^{\text{CC}}\mathcal{L}_{f_{\thetab}}^{\mathcal{C}_{\epsilon}} (\alpha; \xb, y)}
\newcommand{\exprl}[1]{\mathcal{L}_{f_{\thetab}}^{\mathcal{C}_{\epsilon}} (#1; \xb, y)}
\newcommand{\latentdifflb}{\ubar{h}^{\mathcal{C}_{\epsilon}}_{\thetab_h}(\xb)}
\newcommand{\latentdiffub}{\bar{h}^{\mathcal{C}_{\epsilon}}_{\thetab_h}(\xb)}
\newcommand{\latentdiffcclb}{\hspace{-3pt}{\ }^{\text{CC}}\ubar{h}^{\mathcal{C}_{\epsilon}}_{\thetab_h}(\alpha; \xb)}
\newcommand{\latentdiffccub}{\hspace{-3pt}{\ }^{\text{CC}}\bar{h}^{\mathcal{C}_{\epsilon}}_{\thetab_h}(\alpha; \xb)}
\newcommand{\tlatent}{h^t_{\thetab^t_h} (\xb)}
\newcommand{\distillationloss}{\mathcal{R}_{f_{\thetab}}^{\mathcal{C}_{\epsilon}} (\xb, y)}
\newcommand{\distillationlosscc}[1]{\hspace{-3pt}{\ }^{\text{CC}}\mathcal{R}_{f_{\thetab}}^{\mathcal{C}_{\epsilon}} (#1; \xb, y) }
\newcommand{\distillationlossub}{\bar{\mathcal{R}}_{f_{\thetab}}^{\mathcal{C}_{\epsilon}} (\xb, y) }
\newcommand{\distillationlosslb}{\ubar{\mathcal{R}}_{f_{\thetab}}^{\mathcal{C}_{\epsilon}} (\xb, y) }

\usepackage{enumitem}

\usepackage{stmaryrd}

\usepackage[labelformat=simple]{subcaption}

\usepackage{adjustbox}
\usepackage{multirow}
\renewcommand{\bfseries}{\fontseries{b}\selectfont}
\newrobustcmd{\B}{\bfseries}
\newrobustcmd{\IT}{\itshape}
\usepackage[normalem]{ulem}
\newrobustcmd{\U}{\uline}
\def\Decimal{.000}% This structure should corresponds to the ".2" of "table-format"

\def\Ulinehelp#1.#2 {%
	#1.#2\setbox0=\hbox{#1\Decimal}\hspace{-\wd0}{\if\relax#2\relax%
		\uline{\phantom{#1.0}}\else\uline{\phantom{#1.#2}}\fi}%
}
\usepackage{siunitx}

\newcommand{\icmlaffiliationnote}{\textsuperscript{a}Work parly carried out at Inria, École Normale Supérieure, PSL University, CNRS}

\icmltitlerunning{Learning Better Certified Models from Empirically-Robust Teachers}

\begin{document}
	
	\twocolumn[
	\icmltitle{Learning Better Certified Models from Empirically-Robust Teachers}
	
	% It is OKAY to include author information, even for blind submissions: the
	% style file will automatically remove it for you unless you've provided
	% the [accepted] option to the icml2026 package.
	
	% List of affiliations: The first argument should be a (short) identifier you
	% will use later to specify author affiliations Academic affiliations
	% should list Department, University, City, Region, Country Industry
	% affiliations should list Company, City, Region, Country
	
	% You can specify symbols, otherwise they are numbered in order. Ideally, you
	% should not use this facility. Affiliations will be numbered in order of
	% appearance and this is the preferred way.
	\icmlsetsymbol{inria}{a}
	
	\begin{icmlauthorlist}
		\icmlauthor{Alessandro De Palma}{lse,inria}
	\end{icmlauthorlist}
	
	\icmlaffiliation{lse}{Department of Statistics, London School of Economics and Political Science, London, United Kingdom}
	
	\icmlcorrespondingauthor{}{a.de-palma@lse.ac.uk}
	
	% You may provide any keywords that you find helpful for describing your
	% paper; these are used to populate the "keywords" metadata in the PDF but
	% will not be shown in the document
	\icmlkeywords{Certified Training, Adversarial Robustness, Knowledge Distillation, Neural Network Verification}
	
	\vskip 0.3in
	]
	
	% this must go after the closing bracket ] following \twocolumn[ ...
	
	% This command actually creates the footnote in the first column listing the
	% affiliations and the copyright notice. The command takes one argument, which
	% is text to display at the start of the footnote. The \icmlEqualContribution
	% command is standard text for equal contribution. Remove it (just {}) if you
	% do not need this facility.
	
	% Use ONE of the following lines. DO NOT remove the command.
	% If you have no special notice, KEEP empty braces:
	\printAffiliationsAndNotice{\icmlaffiliationnote}  % no special notice (required even if empty)
	% Or, if applicable, use the standard equal contribution text:
	% \printAffiliationsAndNotice{\icmlEqualContribution}

\begin{abstract}

\looseness=-1	
Adversarial training attains strong empirical robustness to specific adversarial attacks by training on concrete adversarial perturbations, but it produces neural networks that are not amenable to strong robustness certificates through neural network verification. 
On the other hand, earlier certified training schemes directly train on bounds from network relaxations to obtain models that are certifiably robust, but display sub-par standard performance.
Recent work has shown that state-of-the-art trade-offs between certified robustness and standard performance can be obtained through a family of losses combining adversarial outputs and neural network bounds.
Nevertheless, differently from empirical robustness, verifiability still comes at a significant cost in standard performance.
In this work, we propose to leverage empirically-robust teachers to improve the performance of certifiably-robust models through knowledge distillation. 
Using a versatile feature-space distillation objective, we show that distillation from adversarially-trained teachers consistently improves on the state-of-the-art in certified training for ReLU networks across a series of robust computer vision benchmarks.

\end{abstract}

\section{Introduction}

\looseness=-1
Deep learning systems deployed in safety-critical applications must be robust to adversarial examples~\citep{Biggio2013,Szegedy2014,Goodfellow2015}: imperceptible input perturbations that induce unintended behaviors such as misclassifications.
Formal robustness certificates can be obtained through neural network verification algorithms~\citep{Tjeng2019,Lomuscio2017,Ehlers2017}.
However, these techniques have a worst-case runtime that is exponential in network size even on piecewise-linear models~\citep{Katz2017}.
While \emph{empirical} robustness to specific adversarial attacks can be attained by training against adversarial inputs~\citep{Madry2018}, a technique known as adversarial training, neural network verifiers fail to provide robustness certificates on the resulting networks in a reasonable time.
This is in spite of sustained progress in verification algorithms, which now couple specialized network convex relaxations~\citep{Zhang2018,xu2021fast,Wong2018,sparsealgosDePalma2024} with efficient divide and conquer strategies~\citep{Bunel2020,HenriksenLomuscio21} within hardware-accelerated branch-and-bound frameworks~\citep{betacrown,Ferrari2022,improvedbab}.

\looseness=-1
Networks amenable to robustness certificates through neural network verification can be obtained using so-called certified training schemes.
Earlier approaches~\citep{Wong2018,Zhang2020,Gowal2019,Mirman2018} proposed to train networks by computing the loss on neural network bounds obtained from convex relaxations, effectively deploying a building block of network verifiers within the training loop.
Counter-intuitively, and owing to their relative smoothness and continuity, the loosest relaxations were found to outperform the others in this context~\citep{Jovanovic2022,Lee2021}.
While the resulting networks enjoy strong verifiability using the same relaxations employed at training time, this is achieved at a significant cost in standard performance.
Relying on branch-and-bound frameworks to perform post-training verification, a more recent line of works has shown that better trade-offs between certified robustness and standard performance can be obtained by combining methods based on convex relaxations with adversarial training~\citep{Balunovic2020,DePalma2022,Mueller2023,Mao2023}.
This was then formalized into the notion of \emph{expressivity}~\citep{DePalma2024}, which entails the ability of a certified training loss to span a continuous range of trade-offs between pure adversarial training and the earlier losses based on convex relaxations, and can be easily implemented through convex combinations.

\looseness=-1
While, as attested by a recent study~\citep{mao2024ctbench}, networks trained through expressive losses produce state-of-the-art certifiably-robust models, their standard performance is still far from ideal.
Noting that empirically-robust models display significantly better standard performance than certifiably-robust models~\citep{croce2021robustbench}, we believe they could be directly employed to improve the certified training process.
Specifically, we aim to produce better certifiably-robust models by performing knowledge distillation~\citep{hinton2015distilling,romero2015fitnets} from an empirically-robust teacher to the target model, leading to the following contributions:
\begin{itemize}[leftmargin=1.3em]
	\item \looseness=-1 We introduce a novel and versatile feature-space distillation loss, which can transfer the knowledge of a teacher onto any convex combination between adversarial student features and bounds from its convex relaxations ($\S$\ref{sec:distillation}).
	\item \looseness=-1 We tightly couple the proposed distillation objective with an existing expressive certified training loss, calling the CC-Dist the resulting algorithm ($\S$\ref{sec:ccdist}). We show that CC-Dist can successfully learn from an adversarially-trained teacher while at the same time significantly surpassing it in terms of certified robustness ($\S$\ref{sec:teacher-student-exp}).
	\item \looseness=-1 We present a comprehensive experimental evaluation of CC-Dist across medium and larger-scale vision benchmarks from the certified training literature, and show that the novel distillation loss enhances both standard performance and certified robustness across all considered benchmarks ($\S$\ref{sec:main-exp}).
	In particular, CC-Dist attains a new state-of-the-art for ReLU architectures on all setups, with significant improvements upon results from the literature on TinyImageNet and downscaled Imagenet.
	\item We demonstrate the wider applicability of distillation from empirically-robust teachers by showing that similar losses can benefit other certified training schemes for ReLU neworks (appendix~\ref{sec:sabr-distillation}), and that the merits of CC-Dist generalize to a different activation function and to a residual architecture (appendix~\ref{sec:app-diff-architectures}).
\end{itemize}

\looseness=-1
We believe our work to be a further step towards bridging the ever-present gap between certified and empirical adversarial robustness.
Code for CC-Dist is provided as part of the supplementary material.

\section{Background} \label{sec:background}

\looseness=-1
Let lowercase letters denote scalars (e.g., $a \in \mathbb{R}$) boldface lowercase letters denote vectors (e.g., $\mbf{a} \in \mathbb{R}^n$), uppercase letters denote matrices (e.g., $A \in \mathbb{R}^{n \times m}$), calligraphic letters denote sets (e.g. $\mathcal{A} \subset \mathbb{R}^n$) and brackets denote intervals (e.g., $[\mbf{a}, \mbf{b}]$).
Furthermore, let lowercase single-letter subscripts denote vector indices (for instance, $\mbf{a}_i$, or $f(\mbf{a})_i$ for a vector-valued function $f$), and let the vector of all entries of $\mbf{a}$ except its $i$-th entry be denoted by $\mbf{a}_{\bar{i}}$.
We will write $\mbf{1}^n \in \mathbb{R}^n$ for the unit vector, $I^{n} \in \mathbb{R}^{n \times n}$ for the identity matrix, and ${}_jI^{n} \in \mathbb{R}^{n \times n}$ for a matrix whose $j$-th column is the unit vector and filled with zeros otherwise.
Abusing notation, given a vector-valued objective function $a(\xb)$ we will denote by $\min_{\xb \in \mathcal{A}} a(\xb)$ the vector storing the minimum of $\min_{\xb \in \mathcal{A}} a(\xb)_i$ in its $i$-th entry.
Finally, we use the following shorthands: $[A]_+ := \max\{0, A\}$, $[A]_- := \min\{0, A\}$, $\llbracket a, b \rrbracket := \{a, a + 1, \dots, b\}$.
Appendix~\ref{sec:app-notation} provides further details on the employed notation.

\looseness=-1
Let $(\xb, y) \sim \mathcal{D}$ be a $k$-way classification dataset with points $\xb \in \mathbb{R}^{d}$ and labels $y \in \llbracket 1, k \rrbracket $. 
We aim to train a feed-forward ReLU neural network $f_{\thetab} : \mathbb{R}^{d} \rightarrow \mathbb{R}^{k}$ with parameters $\thetab \in \mathbb{R}^p$ such that each point $\xb$ from $\mathcal{D}$ and the allowed adversarial perturbations $\mathcal{C}_{\epsilon}(\xb) :=  \left\{ \xb' : \left\lVert \xb' - \xb  \right\rVert_{\infty} \leq \epsilon \right\}$ around it are correctly classified:
\begin{equation}
	\xb' \in \mathcal{C}_{\epsilon}(\xb) \implies \argmax_{i} f_{\thetab}(\xb)_i = y.
	\label{eq:robust-loss}
\end{equation}

\subsection{Neural network verification} \label{sec:nn-verification}

\looseness=-1
Neural network verification is used to check whether equation~\eqref{eq:robust-loss} holds on a given network $f_{\thetab}$, providing a deterministic robustness certificate.
Let $\zb_{f_{\thetab}}(\xb, y)$ denote the differences between the ground truth logits and the other logits: $\zb_{f_{\thetab}}(\xb, y) := \left(f_\theta(\xb)_y \mbf{1}^k - f_\theta(\xb) \right)$. 
Verifiers solve the following optimization problem:
\begin{equation}
	\zdiffopt := \min_{\xb' \in \mathcal{C}_{\epsilon}(\xb)}  \zb_{f_{\thetab}}(\xb', y).
	\label{eq:verification}
\end{equation}
If all entries of $\zdiffopt_{\bar{y}}$ are positive, then equation~\eqref{eq:robust-loss} holds, implying that the network is guaranteed to be robust to any given attack in $\mathcal{C}_{\epsilon}(\xb)$.
An algorithm computing $\zdiffopt$ exactly is called a complete verifier: this is typically done through branch-and-bound~\citep{Bunel2018,sparsealgosDePalma2024,betacrown,Ferrari2022,HenriksenLomuscio21}.
As this was shown to be NP-complete~\citep{Katz2017}, a series of algorithms propose to compute less expensive lower bounds $\zdifflb \leq \zdiffopt$~\citep{Zhang2018,Wong2018,Dvijotham2018,Singh2019} by operating on network relaxations (incomplete verifiers), with $\zdifflb_{\bar{y}} > 0$ successfully providing a robustness certificate.
\looseness=-1
The least expensive incomplete verifier is called Interval Bound Propagation (IBP)~\citep{Gowal2019,Mirman2018}, which is obtained by applying interval arithmetics~\citep{Sunaga1958,Hickey2001} to the network operators. We refer the reader to appendix~\ref{sec:ibp} for further details.

\subsection{Certified training} \label{sec:certified-training}

\looseness=-1
In principle, a network can be trained for verified robustness (certified training) by replacing the employed classification loss $\mathcal{L} : \mathbb{R}^k \times \llbracket 1, k \rrbracket \rightarrow \mathbb{R}$ (e.g., cross-entropy) with its worst-case across the adversarial perturbations~\citep{Madry2018}:
\begin{equation}
	\wcloss := \max_{\xb' \in\ \mathcal{C}_{\epsilon}(\xb)} \mathcal{L} (f_{\thetab}(\xb'), y).
	\label{eq:worst-case}
\end{equation}
However, computing $\wcloss$ exactly is as hard as solving problem~\eqref{eq:verification}, and is hence typically replaced by an approximation.
Let $\xb_{\text{adv}}$ denote the output of an adversarial attack, for instance PGD~\citep{Madry2018}, on the network $f_{\thetab}$.
Lower bounds to $\wcloss$ can be obtained by evaluating the loss at $\xb_{\text{adv}}$:
\begin{equation}
	\advloss := \mathcal{L} (f_{\thetab}(\xb_{\text{adv}}), y) \leq  \wcloss.
	\label{eq:adversarial-loss}
\end{equation}
\looseness=-1
Networks trained using $\mathcal{L} (f_{\thetab}(\xb_{\text{adv}}), y)$ (adversarial training) typically enjoy strong empirical robustness to the same types of attacks employed during training, but are not amenable to formal guarantees, which are the focus of this work.
Assume the loss is monotonically increasing with respect to the non-ground-truth network logits and that it is translation-invariant: $\mathcal{L} (-\zb_{f_{\thetab}}(\xb, y), y) = \mathcal{L} (f_{\thetab}(\xb), y)$~\citep{Wong2018}. This holds for common losses such as cross-entropy.
Given bounds $\zdifflb$ from an incomplete verifier, and setting $\zdifflb_y=0$, an upper bound to $\wcloss$ can then be obtained as:
\begin{equation}
	 \verloss := \mathcal{L} (-\zdifflb, y) \geq \wcloss.
	\label{eq:verified-loss}
\end{equation}
While networks trained via $\mathcal{L} (-\zdifflb, y)$ display strong verifiability through the same incomplete verifiers used to compute the $\zdifflb$ bounds, the most successful option being IBP~\citep{Jovanovic2022}, they display sub-par standard performance.
More recent and successful methods rely on losses featuring a combination between adversarial attacks and bounds from incomplete verifiers~\citep{DePalma2022,Balunovic2020,Mao2023,Mueller2023}, pairing better standard performance with stronger verifiability through branch-and-bound.
In particular, the ability of a loss function to span a continuous range of trade-offs between lower and upper bounds to $\wcloss$, termed \emph{expressivity}, was found to be crucial to maximize performance~\citep{DePalma2024}.
Nevertheless, the best-performing certifiably-robust models still display worse robustness-accuracy trade-offs than empirically-robust models~\citep{DePalma2024,croce2021robustbench}.

\subsection{Knowledge distillation}

\looseness=-1
Knowledge distillation~\citep{hinton2015distilling} was introduced as a way to transfer the predictive ability of a large teacher model $t_{\thetab^t} : \mathbb{R}^d \rightarrow \mathbb{R}^k$ onto the target model $f_{\thetab}$, termed the student.
Let $\text{KL}_{T} : \mathbb{R}^k \times \mathbb{R}^k \rightarrow \mathbb{R}$ denote a KL divergence incorporating a softmax operation with temperature $T$.
In its standard form, knowledge distillation pairs the employed classification loss with a KL divergence term where the teacher logits, termed soft labels, act as target distribution:
\begin{equation}
	\begin{aligned}
			\mathcal{L}^{\text{KD}}_{f_{\thetab}}(\lambda; \xb, y)  := & \enskip \mathcal{L} (f_{\thetab}(\xb), y)  \\ & +  T^2 \lambda\ \text{KL}_{T} (f_{\thetab}(\xb), t_{\thetab^t}(\xb)).
	\end{aligned}
	\label{eq:std-distillation}
\end{equation}
In order to provide more granular teacher information to the student, a series of works perform knowledge distillation on the intermediate activations~\citep{romero2015fitnets,zagoruyko2017paying,heo2019comprehensive}: this is referred to as feature-space distillation.
Let us write both the teacher and the student as the composition of a classification head with a feature map: $t_{\thetab^t} := g^t_{\thetab^t_g} \circ h^t_{\thetab^t_h}$ and $f_{\thetab} := g_{\thetab_g} \circ h_{\thetab_h}$, respectively, where $\thetab^t = [\thetab^t_h, \thetab^t_g]^T$ and $\thetab = [\thetab_h, \thetab_g]^T$.
In its simplest form, when the feature spaces of the teacher and student share the same dimensionality $w$, feature-space distillation encourages similarity between teacher and student features through a squared $\ell_2$ term:
\begin{equation*}
	\mathcal{L}^{\text{F-KD}}_{f_{\thetab}}(\lambda; \xb, y)  := \mathcal{L} (f_{\thetab}(\xb), y) + \lambda \left\lVert   h_{\thetab_h} (\xb) - \tlatent \right\rVert^2_2.
\end{equation*}
Both $\mathcal{L}^{\text{KD}}_{f_{\thetab}}(\lambda; \xb, y)$ and $\mathcal{L}^{\text{F-KD}}_{f_{\thetab}}(\lambda; \xb, y) $ were originally designed to transfer standard network performance from teacher to student, and do not take robustness into account.
Recent works have therefore focused on designing specialized distillation schemes that transfer either empirical adversarial robustness~\citep{goldblum2020adversarially,zhu2022reliable,zi2021revisiting,muhammad2021mixacm} or probabilistic certified robustness~\citep{vaishnavi2022feasibility} from teacher to student. 
Aiming to improve state-of-the-art trade-offs between deterministic certified robustness and standard performance, we will present a novel training scheme that transfers knowledge from an empirically-robust teacher through feature-space distillation.

\section{Distillation for certified robustness} \label{sec:method}

\looseness=-1
We aim to leverage the empirical robustness of adversarially-trained networks to train better certifiably-robust models.
Owing to its state-of-the-art certified training performance, we will couple an existing expressive loss function ($\S$\ref{sec:expressive}) with a novel and versatile feature-space distillation term~($\S$\ref{sec:distillation}).
Pseudo-code and proofs of technical results are respectively provided in appendices~\ref{sec:pseudo-code}~and~\ref{sec:app-proofs}.

\subsection{Expressive losses: CC-IBP} \label{sec:expressive}

\citet{DePalma2024} define a parametrized loss function $\exprl{\alpha}$ to be expressive if: 
%\begin{enumerate}
	(i) $\exprl{0} = \advloss$ and $\exprl{1} = \verloss$;
	(ii) $\advloss \leq \exprl{\alpha} \leq \verloss$ $\forall\ \alpha \in [0, 1]$;
	(iii)  $\exprl{\alpha}$ is continuous and monotonically increasing for $\alpha \in [0, 1]$.
%\end{enumerate}

As demonstrated by the empirical performance of three different expressive losses obtained through convex combinations between adversarial and incomplete verification terms, expressivity results in state-of-the-art certified training performance~\citep{DePalma2024}.
We here focus on CC-IBP, which implements an expressive loss by evaluating $\mathcal{L}$ on convex combinations between adversarial logit differences $\zb_{f_{\thetab}}(\xb_{\text{adv}})$ and lower bounds $\zdifflb$ to the logit differences computed via IBP: $\zdiffcc := ( 1 - \alpha)\ \zb_{f_{\thetab}}(\xb_{\text{adv}}, y) + \alpha\ \zdifflb$. Specifically:
\begin{equation*}
\begin{aligned}
	&\qquad \ccibp := \mathcal{L}\left( -\zdiffcc, y\right).
\end{aligned}
%\label{eq:ccibp}
\end{equation*}
\looseness=-1
As we will show in $\S$\ref{sec:ccdist}, CC-IBP can be tightly coupled with a specialized distillation loss, which we now introduce.

\subsection{Distillation loss} \label{sec:distillation}

\looseness=-1
Certified training is concerned with worst-case network behavior.
Mirroring the robust loss in $\S$\ref{sec:certified-training}, a worst-case feature-space distillation loss over the perturbations would take the following form:
\begin{equation}
	\distillationloss := \max_{
		\xb' \in \mathcal{C}_{\epsilon}(\xb)} \left\lVert   h_{\thetab_h} (\xb') - \tlatent \right\rVert^2_2.
	\label{eq:ideal-distillation-loss}
\end{equation}
However, as for $\wcloss$, computing $\distillationloss$ exactly amounts to solving a non-convex optimization problem over the features. 
As for equation~\eqref{eq:adversarial-loss}, a lower bound $\distillationlosslb$ can be obtained by evaluating the left-hand side of equation~\eqref{eq:ideal-distillation-loss} at the adversarial input $\xb_{\text{adv}}$:
\begin{equation*}
	\distillationlosslb := \left\lVert   h_{\thetab_h} (\xb_{\text{adv}}) - \tlatent \right\rVert^2_2 \leq \distillationloss.
\end{equation*}
\looseness=-1
Similarly to $\verloss$, an upper bound $\distillationlossub$ can be instead computed resorting to IBP.
\begin{restatable}{proposition}{ubloss}
	Let $\latentdifflb$ and $\latentdiffub$ respectively denote IBP lower and upper bounds to the student features:
	\begin{equation*}
		\latentdifflb \leq h_{\thetab_h} (\xb') \leq \latentdiffub \text{, } \quad \forall \xb' \in \mathcal{C}_{\epsilon}(\xb).
	\end{equation*}
	The loss function \[\distillationlossub := \sum_i \max \left\{ \begin{array}{l} 
	\left(\latentdifflb_i - \tlatent_i \right)^2 , \\\left(\latentdiffub_i - \tlatent_i \right)^2\end{array}\right\}\] is an upper bound to the worst-case distillation loss from equation~\eqref{eq:ideal-distillation-loss}:
	$\distillationlossub \geq \distillationloss$.
	\label{prop:ubloss}
\end{restatable}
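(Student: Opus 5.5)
The argument works coordinate by coordinate and rests on one fact: a convex function of one variable attains its maximum over a closed interval at an endpoint. Fix $\xb' \in \mathcal{C}_{\epsilon}(\xb)$ and write $\mbf{t} := \tlatent$. This vector does not depend on $\xb'$, since the teacher is always evaluated at the clean input $\xb$. The squared $\ell_2$ norm splits as $\left\lVert h_{\thetab_h}(\xb') - \mbf{t}\right\rVert_2^2 = \sum_i \left(h_{\thetab_h}(\xb')_i - \mbf{t}_i\right)^2$, so it is enough to bound each summand separately by the $i$-th term of $\distillationlossub$.

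For a fixed coordinate $i$, let $\phi_i(u) := (u - \mbf{t}_i)^2$. This is a convex function of the scalar $u$. By the IBP bounds in the hypothesis, $u := h_{\thetab_h}(\xb')_i$ lies in $[\latentdifflb_i, \latentdiffub_i]$. Since $u$ is in this interval, we can write $u = (1-\beta)\latentdifflb_i + \beta \latentdiffub_i$ for some $\beta \in [0,1]$. Convexity then gives
\[
\phi_i(u) \leq (1-\beta)\phi_i(\latentdifflb_i) + \beta\, \phi_i(\latentdiffub_i) \leq \max\left\{ \phi_i(\latentdifflb_i), \phi_i(\latentdiffub_i)\right\}.
\]
If the interval is degenerate, the bound holds trivially. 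One can also argue directly: $|u - \mbf{t}_i|$ is at most the distance from $\mbf{t}_i$ to the farther endpoint of the interval. This case analysis covers $\mbf{t}_i$ lying inside, below, or above the interval.

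Summing over $i$ gives $\left\lVert h_{\thetab_h}(\xb') - \mbf{t}\right\rVert_2^2 \leq \distillationlossub$ for every $\xb' \in \mathcal{C}_{\epsilon}(\xb)$. The right-hand side does not depend on $\xb'$, so taking the maximum over $\xb'$ yields $\distillationloss \leq \distillationlossub$.

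There is no real obstacle here. The only points to check are that the bound holds elementwise, so the coordinatewise decoupling is valid even though the features are jointly constrained, and that the teacher target is fixed with respect to $\xb'$. The bound may be loose, because different coordinates can attain their worst case at different $\xb'$, but this only affects tightness, not validity.
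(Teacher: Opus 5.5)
Your proof is correct and follows essentially the same route as the paper: split the squared norm coordinatewise, use that each feature coordinate lies in its IBP interval, and use that $(x-b)^2$ is maximized over an interval at an endpoint (the paper's equation~\eqref{eq:loss-closed-form}, which you additionally justify via convexity). The only difference is the order of steps: you bound pointwise in $\xb'$ and maximize at the end, while the paper first exchanges the max and the sum and then relaxes through the sets $\mathcal{M}_i$, so your version avoids having to refer to argmax sets.
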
 
%\begin{proof}
%	\looseness=-1 The result follows from the fact that IBP over-approximates the network: see appendix~\ref{sec:app-proofs}.
%\end{proof}

\looseness=-1
In order to preserve the greatest degree of flexibility, and mirroring expressive losses~($\S$\ref{sec:expressive}), we aim to design a parametrized feature-space distillation loss $\distillationlosscc{\alpha}$ that can span a continuous range of trade-offs between $\distillationlosslb$ and $\distillationlossub$.
Let us denote convex combinations of the adversarial student features $h_{\thetab_h} (\xb_{\text{adv}})$ with $\latentdiffub$ and $\latentdifflb$ as follows:
\begin{equation}
\begin{aligned}
	& \latentdiffccub := (1 - \alpha)\ h_{\thetab_h} (\xb_{\text{adv}}) + \alpha\ \latentdiffub, \\
	& \latentdiffcclb := (1 - \alpha)\ h_{\thetab_h} (\xb_{\text{adv}}) + \alpha\ \latentdifflb.%
\end{aligned}% 
\label{eq:latentdiffcc}
\end{equation}
\looseness=-1
As we next show, $\distillationlosscc{\alpha}$ can be realized by distilling onto $\latentdiffccub$ and $\latentdiffcclb$.
\begin{restatable}{proposition}{ccloss}
	The loss function 
	% NOTE: no need to have the teacher in the definition, it's assumed to be constant. I can treat f as the student.
	\begin{equation*}
		\begin{aligned}
		\distillationlosscc{\alpha}& :=\\  & \hspace{-30pt} \sum_i \max \left\{
		\begin{array}{l}
		\left(
		\latentdiffcclb_i -  \tlatent_i
		\right)^2, \\
		\left(
		\latentdiffccub_i -  \tlatent_i
		\right)^2
		\end{array}
		\right\}
		\end{aligned}
	\end{equation*}
	enjoys the following properties:
%	\begin{enumerate}
		(i) $\distillationlosscc{0} = \distillationlosslb$ and $\distillationlosscc{1} = \distillationlossub$;
		(ii) $\distillationlosslb \leq \distillationlosscc{\alpha} \leq \distillationlossub$ $\forall \ \alpha \in [0, 1]$; 
		(iii) $\distillationlosscc{\alpha}$ is continuous and monotonically increasing for $\alpha \in [0, 1]$.
%	\end{enumerate}
	\label{prop:ccloss}
\end{restatable}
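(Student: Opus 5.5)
The plan is to reduce everything to a per-coordinate statement about a scalar function of $\alpha$, and then sum over $i$. Fix a coordinate $i$ and write $a := h_{\thetab_h}(\xb_{\text{adv}})_i$, $l := \latentdifflb_i$, $u := \latentdiffub_i$ and $t := \tlatent_i$. Since $\xb_{\text{adv}} \in \mathcal{C}_{\epsilon}(\xb)$, the IBP bounds give $l \leq a \leq u$. The $i$-th term of $\distillationlosscc{\alpha}$ is then
\[
\phi_i(\alpha) := \max\left\{ \left(a + \alpha(l - a) - t\right)^2, \left(a + \alpha(u - a) - t\right)^2 \right\}.
\]
Equivalently, $\phi_i(\alpha) = \max_{s \in [l_\alpha, u_\alpha]} (s - t)^2$, where $l_\alpha := (1-\alpha)a + \alpha l$ and $u_\alpha := (1-\alpha)a + \alpha u$. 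This holds because a convex function attains its maximum over an interval at an endpoint.

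\textbf{Property (i).} At $\alpha = 0$ both entries of the max equal $(a - t)^2$, so summing over $i$ gives $\distillationlosslb$. At $\alpha = 1$ the expression coincides term by term with the definition of $\distillationlossub$ in Proposition~\ref{prop:ubloss}.

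\textbf{Continuity.} Each entry of the max is a polynomial in $\alpha$. The maximum of continuous functions is continuous, and so is a finite sum.

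\textbf{Monotonicity.} Since $l \leq a \leq u$, the intervals $[l_\alpha, u_\alpha]$ are nested and increasing in $\alpha$. For $\alpha \leq \beta$ we have $l_\beta \leq l_\alpha$, because $l_\beta - l_\alpha = (\beta - \alpha)(l - a) \leq 0$, and symmetrically $u_\alpha \leq u_\beta$. A maximum over a larger set is at least as large, so $\phi_i(\alpha) \leq \phi_i(\beta)$. Summing over $i$ yields that $\distillationlosscc{\alpha}$ is monotonically increasing on $[0,1]$, which completes (iii).

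\textbf{Property (ii).} This follows directly from (i) and (iii): for $\alpha \in [0,1]$,
\[
\distillationlosscc{0} \leq \distillationlosscc{\alpha} \leq \distillationlosscc{1}.
\]

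The only real subtlety is the monotonicity step. It is tempting to argue directly on the max of two squares, but each square alone need not be monotone in $\alpha$: $(l_\alpha - t)^2$ decreases while $l_\alpha$ moves toward $t$. The fix is the reformulation as a maximum of the convex function $(s - t)^2$ over the nested intervals $[l_\alpha, u_\alpha]$, and this reformulation depends on $a \in [l, u]$. It is therefore important to state explicitly that this containment follows from IBP soundness applied at $\xb_{\text{adv}} \in \mathcal{C}_{\epsilon}(\xb)$.
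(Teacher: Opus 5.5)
Your proof is correct and follows essentially the same route as the paper's. Both rewrite each summand as $\max_{s \in [l_\alpha, u_\alpha]} (s - t)^2$ via the endpoint property of a convex function, use $a \in [l, u]$ (IBP soundness at $\xb_{\text{adv}}$) to show the intervals are nested and growing in $\alpha$, and conclude because a maximum over a larger set is at least as large. The only cosmetic difference is that you deduce (ii) from (i) together with monotonicity, whereas the paper reads it directly off the chain $\{a\} = [l_0, u_0] \subseteq [l_\alpha, u_\alpha] \subseteq [l_1, u_1] = [l, u]$.
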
 
%\begin{proof}
%	See appendix~\ref{sec:app-proofs}.
%\end{proof}
\looseness=-1

\subsection{CC-Dist}  \label{sec:ccdist}

\looseness=-1
While we expect special cases $\distillationlosscc{0}$ and $\distillationlosscc{1}$ to work well on settings where CC-IBP is employed with small or large $\alpha$ values, respectively, we aim to leverage the full flexibility from proposition~\ref{prop:ccloss} to obtain consistent performance across setups.

\looseness=-1
We will now show that, if the student uses an affine classification head, the CC-IBP convex combinations $\zdiffcc$ 
correspond to a simple affine function of $\latentdiffcclb$ and $\latentdiffccub$, onto which distillation is performed.
Consequently, we propose to employ the same $\alpha$ parameter for both $\distillationlosscc{\alpha}$ and $\ccibp$, thus tightly coupling our distillation loss with CC-IBP.
\begin{restatable}{lemma}{convexcombination}
	\looseness=-1
	Let the student classification head $g_{\thetab_g}(\mbf{a})$ be affine, and let  $\tilde{g}_{\thetab_g}^y(\mbf{a}) := \tilde{W}^{n} \mbf{a} + \tilde{\mbf{b}}^{n}$ be its composition with the operator performing the difference between logits.
	We can write the CC-IBP convex combinations as:
	\begin{equation*}
		\begin{aligned}
		\zdiffcc &= \\[-5pt] & \hspace{-20pt} \Big[\begin{array}{cc}
			[\tilde{W}^n]_+ & [\tilde{W}^n]_-
		\end{array} \Big]		\left[ \begin{array}{l}
		\latentdiffcclb \\[3pt]
		\latentdiffccub
		\end{array} \right] + \tilde{\bb}^n.
		\end{aligned}
	\end{equation*}
%	\begin{equation*}
%		\zdiffcc = [\tilde{W}^n]_+ \latentdiffcclb + [\tilde{W}^n]_- \latentdiffccub + \tilde{\bb}^n.
%	\end{equation*}
	\label{prop:convexcombination}
\end{restatable}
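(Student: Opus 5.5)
The plan is to unfold the definition of $\zdiffcc$ and show that each of its two terms is an affine function of the corresponding student features, after which linearity collects everything. By definition, $\zdiffcc = (1-\alpha)\, \zb_{f_{\thetab}}(\xb_{\text{adv}}, y) + \alpha\, \zdifflb$. I will handle the two pieces separately, then sum them using the split $\tilde{\bb}^n = (1-\alpha)\tilde{\bb}^n + \alpha \tilde{\bb}^n$.

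\emph{Adversarial term.} Since $g_{\thetab_g}$ is affine and taking logit differences is linear (multiplication by ${}_yI^{k} - I^{k}$), their composition is $\tilde{g}^y_{\thetab_g}$. This gives
\begin{equation*}
\zb_{f_{\thetab}}(\xb_{\text{adv}}, y) = \tilde{W}^n h_{\thetab_h}(\xb_{\text{adv}}) + \tilde{\bb}^n .
\end{equation*}
I then split $\tilde{W}^n = [\tilde{W}^n]_+ + [\tilde{W}^n]_-$ and write the result as $[\tilde{W}^n]_+ h_{\thetab_h}(\xb_{\text{adv}}) + [\tilde{W}^n]_- h_{\thetab_h}(\xb_{\text{adv}}) + \tilde{\bb}^n$.

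\emph{IBP term.} Here I need the form of the IBP bound on the logit differences when the last (affine) layer is merged with the difference operator, as is standard in IBP (appendix~\ref{sec:ibp}). Interval arithmetic through an affine map $\mbf{a} \mapsto \tilde{W}^n \mbf{a} + \tilde{\bb}^n$ with $\mbf{a} \in [\latentdifflb, \latentdiffub]$ gives the lower bound
\begin{equation*}
\zdifflb = [\tilde{W}^n]_+ \latentdifflb + [\tilde{W}^n]_- \latentdiffub + \tilde{\bb}^n .
\end{equation*}
This step is the main point requiring care. I must confirm that the paper's IBP computes $\zdifflb$ by propagating the feature bounds through the merged matrix $\tilde{W}^n$, rather than bounding the logits first and subtracting them. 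The latter would give a different, looser expression, and the statement would fail. I would state this convention explicitly, citing the IBP description in the appendix. I would also note that the $y$-th entry is zero under either reading (the $y$-th row of $\tilde{W}^n$ vanishes), which is consistent with setting $\zdifflb_y = 0$.

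\emph{Combining.} Summing $(1-\alpha)$ times the first expression and $\alpha$ times the second, I group by the positive and negative parts. The positive part yields $[\tilde{W}^n]_+\big((1-\alpha) h_{\thetab_h}(\xb_{\text{adv}}) + \alpha \latentdifflb\big) = [\tilde{W}^n]_+ \latentdiffcclb$. The negative part similarly yields $[\tilde{W}^n]_- \latentdiffccub$, using the definitions in equation~\eqref{eq:latentdiffcc}. The bias terms combine to $\tilde{\bb}^n$. Writing the two matrix products as the single block product $\big[\,[\tilde{W}^n]_+ \;\; [\tilde{W}^n]_-\,\big]$ applied to the stacked vector of $\latentdiffcclb$ and $\latentdiffccub$ gives the claimed identity.
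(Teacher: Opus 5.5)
Your proposal is correct and follows the paper's proof step for step. You write the adversarial logit differences as $\tilde{W}^n h_{\thetab_h}(\xb_{\text{adv}}) + \tilde{\bb}^n$ with $\tilde{W}^n = [\tilde{W}^n]_+ + [\tilde{W}^n]_-$, take $\zdifflb = [\tilde{W}^n]_+ \latentdifflb + [\tilde{W}^n]_- \latentdiffub + \tilde{\bb}^n$, and regroup by positive and negative parts. The convention you flagged is the one the paper uses: equation~\eqref{eq:ibp} in appendix~\ref{sec:ibp} propagates the feature bounds through the merged matrix $\tilde{W}^n$.
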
%
%\begin{proof}
%	See appendix~\ref{sec:app-proofs} for an algebraic derivation.
%\end{proof}
%
\paragraph{CC-Dist loss}
Let $\beta$ be the distillation coefficient, determining the relative weight of the distillation term.
Omitting any regularization, the overall training loss for CC-Dist (short for CC-IBP Distillation) takes the following form:
\begin{equation}
	\begin{aligned}
		\mathcal{L}^{\text{CC-Dist}} (\alpha, \beta; \xb, y) &:= \\  & \hspace{-30pt} \ccibp + \beta\ \distillationlosscc{\alpha}.
	\end{aligned}
	\label{eq:loss}
\end{equation}
\looseness=-1
In practice, denoting by $w$ the dimensionality of the feature space, unless stated otherwise, we employ $\beta= \nicefrac{5}{w}$ in all reported experiments owing to its consistent performance.% across setups.

\paragraph{Teacher models}
\looseness=-1
We propose to use teacher models trained via pure adversarial training (see $\S$\ref{sec:certified-training}) on the target task, employing the same architecture as the student.
In order to comply with the conditions of lemma~\ref{prop:convexcombination}, we define the features as the activations before the last affine network layer.

\paragraph{Intuition}
We expect teachers to transfer part of their superior natural accuracy and empirical robustness to the target models. 
While the teacher certified robustness will be significantly smaller than for models trained via certified training, the CC-IBP term in equation~\eqref{eq:loss} is employed to preserve student verifiability, hence increasing its certified robustness.
Experimental evidence is provided in $\S$\ref{sec:teacher-student-exp} and appendix~\ref{sec:distillation-effect}.
Furthermore, appendix~\ref{sec:app-clean-teachers} empirically demonstrates that the empirical robustness of the teacher is essential for the success of the distillation process.

\paragraph{Wider applicability} Appendix~\ref{sec:sabr-distillation} shows that the proposed distillation scheme can be adapted to benefit other certified training algorithms.

\section{Related work} \label{sec:related}

Owing to its favorable optimization properties~\citep{Jovanovic2022}, the relatively loose IBP ($\S$\ref{sec:nn-verification}) is the method of choice~\citep{mao2024understanding} when computing lower bounds to equation~\eqref{eq:verification}  to be employed for certified training.
Tighter techniques are however preferred within branch-and-bound-based complete verifiers.
Branch-and-bound~\citep{Bunel2018} couples a bounding algorithm with a strategy to refine the network relaxations by iteratively splitting problem~\eqref{eq:verification} into subproblems (branching), which typically operates by splitting piecewise linearities into their linear components~\citep{improvedbab,HenriksenLomuscio21,Ferrari2022,Bunel2020}.
Earlier bounding algorithms relied on linear network relaxations~\citep{Bunel2018,Ehlers2017,Anderson2020}, with more effective techniques operating in the dual space~\citep{Dvijotham2018,BunelDP20,sparsealgosDePalma2024}.
State-of-the-art branch-and-bound frameworks have since converged to using fast bounds based on propagating couples of linear bounds through the network~\citep{Zhang2018,Singh2019,xu2021fast} with Lagrangian relaxations employed to capture additional constraints~\citep{betacrown,Ferrari2022,zhang2022gcpcrown,zhou2024scalable}.

\looseness=-1
Earlier certified training works add geometric regularizers to the standard adversarial training loss~\citep{Xiao2019,Croce2019provable,Liu2021training}. 
Recent certified training schemes mix adversarial training with bounds from network relaxations to obtain strong post-training verifiability using branch-and-bound (see $\S$\ref{sec:certified-training}).
\citet{Balunovic2020} perform adversarial attacks over latent-space over-approximations. 
The good performance of the latter at lower perturbation radii was matched through a regularizer on the area of network relaxations~\citep{DePalma2022}, IBP-R, and later refined by connecting the gradients of the over-approximation with those of the attack (TAPS)~\citep{Mao2023}.
Another work, named SABR, proposed to compute network relaxations over a subset of $\mathcal{C}_{\epsilon}$ and to tune the subset size for each benchmark.
This was then generalized into the notion of expressive losses~\citep{DePalma2024} (see $\S$\ref{sec:expressive}), which includes losses based on convex combinations forming the basis of this work.
All the above methods train standard feedforward ReLU networks using convex relaxations, with IBP bounds being the most prevalent among them. Recent work has highlighted the superior representational power of networks trained via tighter relaxations~\citep{Baader2024,Mao2024}, and sought to overcome the associated optimization challenges~\citep{Balauca2025}.
Certifiably-robust networks can also be obtained by training 1-Lipschitz models using alternative architectures~\citep{Zhang2021towards,Zhang2022boosting}, the best-performing being SortNet~\citep{Zhang2022rethinking}.
These were also shown to benefit from additional generated data~\citep{Altstidl2024}, whose use is beyond the focus of our work.
While we here concentrate on $\ell_\infty$ perturbations and deterministic certificates, 1-Lipschitz architectures~\citep{prach2024,Meunier22} and Lipschitz regularization~\citep{hu2023unlocking,leino2021globally} are very effective when $\mathcal{C}_{\epsilon}$ is defined using the $\ell_2$ norm, setting under which probabilistic robustness certificates can be effectively obtained using randomised smoothing~\citep{Cohen2019}.

\begin{table*}[h!]
	%	\vspace{-7pt}		
	\sisetup{detect-all = true}
	
	\setlength{\heavyrulewidth}{0.09em}
	\setlength{\lightrulewidth}{0.5em}
	\setlength{\cmidrulewidth}{0.03em}
	
	\centering
	
	\scriptsize
	\setlength{\tabcolsep}{4pt}
	\aboverulesep = 0.3mm  % 0.605mm
	\belowrulesep = 0.1mm  % 0.984mm
	\caption{%\small 
		\looseness=-1
		Evaluation of the effect of CC-Dist compared to pure CC-IBP~\citep{DePalma2024} when training for certified robustness against $\ell_\infty$ norm perturbations. % on CIFAR-10, TinyImageNet and downscaled ImageNet ($64\times64$).
		Literature results are provided as a reference.
		We highlight in bold the entries corresponding to the largest standard or certified accuracy for each benchmark, and, when they do not coincide, underline the best accuracies for ReLU-based architectures.
		\label{fig:main-results}}
	\begin{adjustbox}{width=.82\textwidth, center}
		\begin{tabular}{
				c
				c
				l
				c
				S[table-format=2.2]
				S[table-format=2.2]
			} 
			
			\toprule \\[-5pt]
			
			\multicolumn{1}{ c }{Dataset} &
			\multicolumn{1}{ c }{$\epsilon$} &
			\multicolumn{1}{ c }{Method} &
			\multicolumn{1}{ c }{Source} &
			\multicolumn{1}{ c }{Standard acc. [\%]} &
			\multicolumn{1}{ c }{Certified acc. [\%]}  \\
			
			\cmidrule(lr){1-2} \cmidrule(lr){2-2} \cmidrule(lr){3-4} \cmidrule(lr){5-6} \\[-5pt]
			
			\multirow{22.5}{*}{CIFAR-10} & \multirow{11}{*}{$\frac{2}{255}$} &
			
			\multicolumn{1}{ c }{\textsc{CC-Dist}} & this work
			& \B 81.55 & \B 64.60 \\
			
			& & \multicolumn{1}{ c }{\textsc{CC-IBP}} & this work
			& 79.51 & 63.50  \\
			
			\cmidrule(lr){3-6} \\[-6pt]
			
			& & \multicolumn{1}{ c }{\textsc{CC-IBP}} & \citet{DePalma2024}
			& 80.09 & 63.78 \\
			
			& & \multicolumn{1}{ c }{\textsc{MTL-IBP}$^\dagger$} & \citet{mao2024ctbench}
			& 78.82 & 64.41 \\
			
			& & \multicolumn{1}{ c }{\textsc{STAPS}} & \citet{Mao2023}
			& 79.76 & 62.98  \\
			
			& & \multicolumn{1}{ c }{\textsc{SABR}$^\dagger$} & \citet{mao2024understanding}
			& 79.89 & 63.28  \\
			
			& & \multicolumn{1}{ c }{\textsc{SortNet}} & \citet{Zhang2022rethinking}
			& 67.72 & 56.94\\
			
			& & \multicolumn{1}{ c }{\textsc{IBP-R}$^\dagger$} & \citet{mao2024understanding}
			& 80.46 &  62.03 \\
			
			& & \multicolumn{1}{ c }{\textsc{IBP}$^\dagger$} & \citet{mao2024understanding} 
			& 68.06 &   56.18 \\
			
			& & \multicolumn{1}{ c }{\textsc{CROWN-IBP}$^\dagger$} & \citet{mao2024ctbench} 
			&  67.60 &  53.97  \\ 
			
			\cmidrule(lr){2-6} \\[-6pt]
			
			& \multirow{11}{*}{$\frac{8}{255}$} &
			
			\multicolumn{1}{ c }{\textsc{CC-Dist}} & this work
			& \B 55.13 & \U{35.52} \\
			
			& & \multicolumn{1}{ c }{\textsc{CC-IBP}} & this work
			& 54.46 & 35.42  \\
			
			\cmidrule(lr){3-6} \\[-6pt]
			
			& & \multicolumn{1}{ c }{\textsc{CC-IBP}} & \citet{DePalma2024}
			& 53.71 & 35.27 \\
			
			& & \multicolumn{1}{ c }{\textsc{MTL-IBP}$^\dagger$} & \citet{mao2024ctbench}
			& 54.28 & 35.41 \\
			
			& & \multicolumn{1}{ c }{\textsc{STAPS}} & \citet{Mao2023}
			& 52.82 &  34.65  \\
			
			& & \multicolumn{1}{ c }{\textsc{SABR}$^\dagger$} & \citet{mao2024ctbench}
			&  52.71 & 35.34  \\
			
			& & \multicolumn{1}{ c }{\textsc{SortNet}} & \citet{Zhang2022rethinking}
			& 54.84 & \B 40.39 \\
			
			& & \multicolumn{1}{ c }{\textsc{IBP-R}} & \citet{DePalma2022}
			& 52.74 &   27.55  \\ 
			
			& & \multicolumn{1}{ c }{\textsc{IBP}} & \citet{Shi2021} 
			&  48.94  &  34.97 \\ 
			
			& & \multicolumn{1}{ c }{\textsc{CROWN-IBP}$^\dagger$} & \citet{mao2024ctbench} 
			&  48.25 & 32.59 \\ 
			
			\cmidrule{1-6} \\[-6pt]
			
			\multirow{10}{*}{TinyImageNet} & \multirow{10}{*}{$\frac{1}{255}$} &
			
			\multicolumn{1}{ c }{\textsc{CC-Dist}} & this work
			& \B 43.78  & \B 27.88 \\
			
			& & \multicolumn{1}{ c }{\textsc{CC-IBP}} & this work
			& 41.28 & 26.53  \\
			
			\cmidrule(lr){3-6} \\[-6pt]
			
			& & \multicolumn{1}{ c }{\textsc{Exp-IBP}} & \citet{DePalma2024}
			& 38.71 &  26.18 \\
			
			& & \multicolumn{1}{ c }{\textsc{MTL-IBP}$^\dagger$} & \citet{mao2024ctbench}
			& 35.97 & 27.73 \\
			
			& & \multicolumn{1}{ c }{\textsc{STAPS}$^\dagger$} & \citet{mao2024ctbench}
			& 30.63 &  22.31  \\
			
			& & \multicolumn{1}{ c }{\textsc{SABR}$^\dagger$} & \citet{DePalma2024}
			& 38.68 & 25.85   \\
			
			& & \multicolumn{1}{ c }{\textsc{SortNet}} & \citet{Zhang2022rethinking}
			& 25.69 & 18.18  \\
			
			& & \multicolumn{1}{ c }{\textsc{IBP}$^\dagger$} & \citet{mao2024ctbench} 
			& 26.77 &   19.82 \\
			
			& & \multicolumn{1}{ c }{\textsc{CROWN-IBP}$^\dagger$} & \citet{mao2024ctbench} 
			& 28.44 &  22.14  \\ 
			
			\cmidrule{1-6} \\[-6pt]
			
			\multirow{7.5}{*}{ImageNet64} & \multirow{7.5}{*}{$\frac{1}{255}$} &
			
			\multicolumn{1}{ c }{\textsc{CC-Dist}} & this work
			& \B 28.17 &  \B 13.96  \\ 
			
			& & \multicolumn{1}{ c }{\textsc{CC-IBP}} & this work
			& 25.94  & 13.69  \\
			
			\cmidrule(lr){3-6} \\[-6pt]
			
			& & \multicolumn{1}{ c }{\textsc{Exp-IBP}} & \citet{DePalma2024}
			& 22.73 & 13.30  \\
			
			& & \multicolumn{1}{ c }{\textsc{SABR}$^\dagger$} & \citet{DePalma2024}
			& 20.33 & 12.39  \\
			
			& & \multicolumn{1}{ c }{\textsc{SortNet}} & \citet{Zhang2022rethinking}
			& 14.79 & 9.54  \\
			
			& & \multicolumn{1}{ c }{\textsc{CROWN-IBP}} & \citet{Xu2020} 
			& 16.23 &  8.73  \\ 
			
			& & \multicolumn{1}{ c }{\textsc{IBP}} & \citet{Gowal2019} 
			& 15.96 &   6.13 \\
			
			\bottomrule 
			
			\\[-5pt]
			\multicolumn{6}{ l }{
				\shortstack[l]{$^\dagger$Evaluation from later work attaining a larger standard or certified accuracy than reported in the original work.} 
			}
		\end{tabular}
	\end{adjustbox}
%	\vspace{-7pt}
\end{table*}

\looseness=-1
Many specialized knowledge distillation schemes designed to transfer empirical robustness focus on modifying the logit-space distillation loss from equation~\eqref{eq:std-distillation}.
Assuming an adversarially-trained teacher, ARD~\citep{goldblum2020adversarially} proposes to modify the KL term from equation~\eqref{eq:std-distillation} to evaluate the student on an adversarially-perturbed input.
IAD~\citep{zhu2022reliable}, instead, replaces the cross-entropy loss with a KL term between the student's perturbed and unperturbed outputs, weighted according to the teacher's inability to correctly classify the adversarial input.
RSLAD~\citep{zi2021revisiting} removes the dynamic weighting and trains with two KL terms between teacher and student: one using clean inputs, the other where the student is evaluated on a perturbation seeking to maximize the distance from the clean teacher output.
In this context, less attention has been devoted to feature-space distillation methods~\citep{muhammad2021mixacm}.
CRD~\citep{vaishnavi2022feasibility} presented a logit-space distillation loss designed to transfer probabilistic certified robustness, relying on a sum of the student cross-entropy loss with an $\ell_2$ term on the models softmax outputs, all evaluated on random Gaussian perturbations.
The focus of all the above works is to transfer some of the performance of larger and more capable models onto smaller and less expensive architectures. 
However, it was shown that training a network for a given task, and then re-using it to distill knowledge onto a second model with the same architecture and training goal can improve performance through regularization.
This was observed both in the context of standard training~\citep{furlanello2018born}, and of pure adversarial training for empirical \linebreak robustness~\citep{chen2020robust}.
Finally, closely-related works at the intersection between knowledge distillation and deterministic certified robustness include: 
(i) investigating how robustness guarantees obtained on a student can be transferred from a distilled student to its teacher~\citep{indri2024distillation}, and
(ii) distilling a non-robust perceptual similarity metric onto a 1-Lipschitz architecture to obtain a certifiably-robust perceptual similarity metric~\citep{ghazanfari2024lipsim}.
In this work, targeting supervised classification tasks from the robust vision literature and using techniques based on convex relaxations, we aim to leverage an empirically-robust yet hard-to-verify teacher to improve the deterministic certified robustness of a student with the same architecture.

\begin{table*}[h!]
	%	\vspace{-7pt}
	\sisetup{detect-all = true}
	
	\centering
	
	\scriptsize
	\setlength{\tabcolsep}{4pt}
	\aboverulesep = 0.3mm  % 0.605mm
	\belowrulesep = 0.1mm  % 0.984mm
	\caption{\small 
		Comparison between the CC-Dist student models from table~\ref{fig:main-results} and the respective teacher models. %\vspace{-5pt}
		\label{fig:teacher-student}}
	\begin{adjustbox}{width=.67\textwidth, center}
		
		\begin{tabular}{
				c
				c
				l
				S[table-format=2.2]
				S[table-format=2.2]
				S[table-format=2.2]
			} 
			
			\toprule \\[-5pt]
			
			\multicolumn{1}{c}{\multirow{1}{*}{Dataset}} &
			\multicolumn{1}{c}{\multirow{1}{*}{$\epsilon$}} &
			\multicolumn{1}{c}{\multirow{1}{*}{Method}} &
			\multicolumn{1}{c}{Standard acc. [\%]}  &
			\multicolumn{1}{c}{AA acc. [\%]} &
			\multicolumn{1}{c}{Certified acc.$^\dagger$ [\%]} \\
			
			\cmidrule(lr){1-2} \cmidrule(lr){3-3} \cmidrule(lr){4-6} \\[-5pt]
			
			\multirow{5}{*}{CIFAR-10} & \multirow{2}{*}{$\frac{2}{255}$} &
			
			\multicolumn{1}{ c }{\textsc{Student}}
			& 81.55 & 70.71 &  64.4 \\
			
			& & \multicolumn{1}{ c }{\textsc{Teacher}} 
			& 88.23  & 73.26 &  1.2 \\ 
			
			\cmidrule(lr){2-6} \\[-5pt]
			
			& \multirow{2}{*}{$\frac{8}{255}$} &
			
			\multicolumn{1}{ c }{\textsc{Student}} 
			& 55.13  & 36.80 &  35.8 \\
			
			& & \multicolumn{1}{ c }{\textsc{Teacher}} 
			& 78.16  & 41.62 &  0.0 \\
			
			\cmidrule{1-6} \\[-5pt]
			
			\multirow{2}{*}{TinyImageNet} & \multirow{2}{*}{$\frac{1}{255}$} &
			
			\multicolumn{1}{ c }{\textsc{Student}} 
			&  43.78   & 32.30 &  32.6 \\
			
			& & \multicolumn{1}{ c }{\textsc{Teacher}}
			&  47.18  & 34.17 &  17.4 \\
			
			\cmidrule{1-6} \\[-5pt]
			
			\multirow{2}{*}{ImageNet64} & \multirow{2}{*}{$\frac{1}{255}$} &
			
			\multicolumn{1}{ c }{\textsc{Student}} 
			&  28.17  & 17.87  &  5.6 \\ 
			
			& & \multicolumn{1}{ c }{\textsc{Teacher}}
			&  40.30  & 25.34 &  0.0 \\  
			
			\bottomrule 
			
			\\[-5pt]
			\multicolumn{6}{ l }{
				\shortstack[l]{$^\dagger$Certified accuracy reported over the first $500$ test images.} 
			}
			
		\end{tabular}
	\end{adjustbox}
	%	\vspace{-3pt}
	%	\vspace{-5pt}
\end{table*}

\section{Experimental evaluation} \label{sec:experiments}

\looseness=-1
This section presents an experimental evaluation of the proposed distillation scheme, focusing on image classification datasets from the certified training literature.
We first present results on the effectiveness of our novel distillation loss ($\S$\ref{sec:main-exp}), followed by a comparison of the trained models with their teachers ($\S$\ref{sec:teacher-student-exp}), and by an analysis of the effect of the distillation coefficient on CC-Dist ($\S$\ref{sec:sensitivity-study}).

\looseness=-1
In line with previous work~\citep{Mueller2023,Mao2023,Shi2021,DePalma2024}, all the models trained via CC-Dist in this evaluation are based a $7$-layer convolutional architecture named \texttt{CNN-7}, regularized using an $\ell_1$ term.
The teacher models are trained via adversarial training based on a 10-step PGD adversary~\citep{Madry2018} and $\ell_1$ regularization.
As this was found to be beneficial in practice, we employ the \texttt{CNN-7} architecture for the teacher model too. 
We implemented CC-Dist in PyTorch~\citep{Paszke2019} similarly to the expressive losses codebase~\citep{DePalma2024}, also relying on \texttt{auto\_LiRPA}~\citep{Xu2020} to compute IBP bounds.
As common in the wider certified training literature~\citep{Mueller2023,Mao2023}, we employ the custom regularization and initialization introduced by~\citet{Shi2021}.
Post-training verification is performed using the OVAL branch-and-bound framework~\citep{Bunel2018,BunelDP20,improvedbab} similarly to~\citet{DePalma2022,DePalma2024}, using a configuration based on $\alpha$-$\beta$-CROWN network bounds~\citep{betacrown,xu2021fast} and employing at most $600$ seconds per image.
Additional details and supplementary experiments can be found in appendices~\ref{sec:app-experiments-details}~and~\ref{sec:app-supp-experiments}, respectively.

\subsection{CC-Dist evaluation} \label{sec:main-exp}

\looseness=-1
We now evaluate the performance of CC-Dist ($\S$\ref{sec:ccdist}) with respect to pure CC-IBP~\citep{DePalma2024}, and compare the resulting performance with results taken from the certified training literature.
For each method from the literature, we report the best attained performance across previous works and network architectures. 
In order to isolate the effect of any modification within our codebase and experimental setup, we report both CC-IBP results from our evaluations, and the literature results corresponding to the best-performing expressive loss (CC-IBP, MTL-IBP, or Exp-IBP) from the original work~\citep{DePalma2024}.
Table~\ref{fig:main-results} shows that the distillation loss improves on both the standard and the certified accuracies of pure CC-IBP across all the considered CIFAR-10~\citep{CIFAR10}, TinyImageNet~\citep{Le2015}, and downscaled ImageNet ($64\times64$)~\citep{Chrabaszcz2017} benchmarks.
Consequently, CC-Dist establishes a new state-of-the-art, improving on both standard and certified accuracy compared to previusly-reported results, on all benchmarks except CIFAR-10 with $\epsilon=\nicefrac{8}{255}$.
Differently from the other settings, the best certified accuracy on this benchmark is attained by SortNet~\citep{Zhang2022rethinking}, which uses a specialized 1-Lipschitz network architecture (see $\S$\ref{sec:related}). Its performance on this setup can be further improved through the use of additional generated data~\citep{Altstidl2024}, which is outside the scope of our work.
While the sub-par performance of ReLU networks on this benchmark is well known in the literature~\citep{Mueller2023,DePalma2024}, we are hopeful that distillation from empirically-robust teachers may be beneficial to 1-Lipschitz networks too, and defer the investigation to future work.
On TinyImageNet and downscaled ImageNet we found that both the standard and certified accuracies of CC-IBP benefit from a smaller $\alpha$ coefficient than those employed in~\citet{DePalma2024}, explaining the difference between our CC-IBP results and those reported in the original work.
Owing to the combined effect of this and of the benefits of our novel distillation loss, the difference between CC-Dist and the results from the literature is particularly remarkable on these two larger-scale benchmarks.

\begin{figure*}[h!]
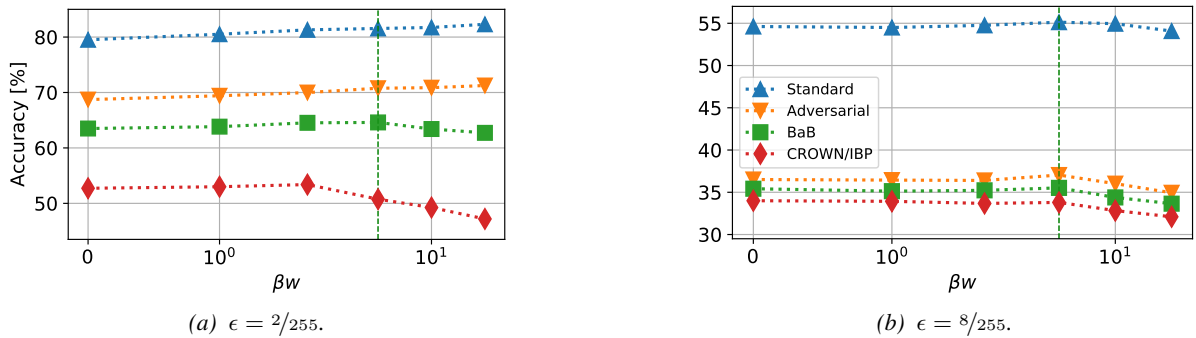

	\hspace{15pt}
	\begin{subfigure}{0.40\textwidth}
		\centering
		\includegraphics[width=\textwidth]{figures/CC-Dist_2_255.pdf}
		\vspace{-15pt}
		\captionsetup{justification=centering, labelsep=space, margin=6pt}
		\caption{\label{fig:beta-sensitivity-2-255}  $\epsilon=\nicefrac{2}{255}$.}
	\end{subfigure}\hfill
	\begin{subfigure}{0.40\textwidth}
		\centering
		\includegraphics[width=\columnwidth]{figures/CC-Dist_8_255.pdf}
		\vspace{-15pt}
		\captionsetup{justification=centering, labelsep=space, margin=6pt}
		\caption{\label{fig:alpha-sensitivity-8-255}  $\epsilon=\nicefrac{8}{255}$.}
	\end{subfigure}\hspace{15pt}
	\caption{\label{fig:beta-sensitivity} 
		\looseness=-1
		Standard, empirical adversarial and certified accuracies (BaB and CROWN/IBP) under $\ell_\infty$ perturbations of networks trained using CC-Dist under varying distillation coefficient $\beta$. 
		The legend is reported once for all subfigures in plot~\ref{fig:alpha-sensitivity-8-255}.
		Metrics are reported on the CIFAR-10 test set.
		The $\beta$ value employed throughout the paper (see $\S$\ref{sec:ccdist}) is marked by a dashed vertical line. 
	}
	%	\vspace{-10pt}
\end{figure*}

\looseness=-1
Appendix~\ref{sec:distillation-effect} sheds light on the success of our distillation technique by analyzing differences in empirical robustness and IBP regularization between CC-IBP and CC-Dist.
Appendices~\ref{sec:distillation-special-cases},~\ref{sec:sabr-distillation} and~\ref{sec:app-diff-architectures} demonstrate the wider potential of knowledge distillation for certified training by showing (i) the effectiveness of special cases of the proposed distillation loss, (ii) that SABR, MTL-IBP, and IBP-R benefit from a similar distillation process and (iii) that CC-Dist is effective on a different model architecture and activation function.
Appendices~\ref{sec:logit-distillation} and~\ref{sec:app-earlier-latents} respectively investigate the performance of logit-based distillation, and of using earlier features for $\distillationlosscc{\alpha}$.

\subsection{Teacher-student comparison} \label{sec:teacher-student-exp}

\looseness=-1
In order to provide insights on the distillation process, we now present a comparison of the performance of the CC-Dist models from table~\ref{fig:main-results} with that of the respective adversarially-trained teacher models.
Owing to the extremely large computational cost associated to running branch-and-bound on networks trained via pure adversarial training (the teachers), we restrict the certified robustness comparison on the first $500$ images of the test set of each benchmark.
Table~\ref{fig:teacher-student} shows a clear difference in terms of standard accuracy and empirical robustness to AutoAttack~\citep{Croce2020} between teacher and student, the latter displaying worse performance as expected.
On the other hand, the teachers fail to attain good certified robustness across any of the considered benchmarks, highlighting that the students significantly surpass the respective teachers in terms of verifiability.
In other words, the distillation process successfully exploits empirically-robust teachers to improve the state-of-the-art in certifiably-robust models.
Appendix~\ref{sec:app-clean-teachers} shows that employing clean teachers results in worse performance profiles, highlighting the importance of robust representations.

\subsection{Sensitivity to distillation coefficient} \label{sec:sensitivity-study}

\looseness=-1
We now present an empirical study of the effect of the distillation coefficient $\beta$.
As mentioned in $\S$\ref{sec:ccdist}, we keep $\beta = \nicefrac{5}{w}$ throughout all the experiments, where $w$ denotes the dimensionality of the feature space. 
This is in order to avoid the introduction of an additional hyper-parameter, and owing to its strong performance across the considered settings.
Figure~\ref{fig:beta-sensitivity} compares the behavior at $\beta = \nicefrac{5}{w}$ from table~\ref{fig:main-results}, where both certified robustness via branch-and-bound and standard performance are enhanced, with results for different distillation coefficients. 
It reports standard performance, empirical adversarial robustness (measured through the attacks within the employed branch-and-bound framework), and certified robustness (both using branch-and-bound and the best results between incomplete verifiers CROWN~\citep{Zhang2018} and IBP~\citep{Gowal2019}) across $\beta$ values on the CIFAR-10 test set.
For $\epsilon=\nicefrac{2}{255}$, increasing $\beta$ leads to a steady increase in both standard accuracy and empirical robustness. 
At the same time, verifiability increases until an intermediate $\beta$ value, then steadily decreases, with the certified accuracy via incomplete verifiers peaking at a lower value than when using branch-and-bound.
This is similar to the behavior of the $\alpha$ parameter in expressive losses such as CC-IBP~\citep{DePalma2024}.
As also visible in table~\ref{fig:main-results}, distillation appears to be less helpful for $\epsilon=\nicefrac{8}{255}$, where standard performance is enhanced only for intermediate $\beta$ values.
We explain this difference through the larger degree of similarity between teacher and student on $\epsilon=\nicefrac{2}{255}$: see table~\ref{fig:teacher-student}.

\section{Conclusions}

\looseness=-1
We presented a novel training scheme, named CC-Dist, that successfully leverages empirically-robust models to train better certifiably-robust neural networks through knowledge distillation.
Complementing CC-IBP, CC-Dist relies on a versatile feature-space distillation loss that can operate on convex combinations between bounds from student convex relaxations and adversarial student features.
We show that CC-Dist improves on both the standard and certified robust accuracies of CC-IBP on all considered benchmarks from the robust vision literature, in spite of the relatively low certified robustness of the employed teacher models.
As a result, CC-Dist attains a new state-of-the-art in certified training across ReLU architectures, showcasing the potential of knowledge distillation for certified robustness.
Results in appendix demonstrate that distillation from empirically-robust teachers is beneficial for other certified training schemes and on a different activation function, pointing to its broader applicability.
While our work focuses on teachers that employ the same architecture as the student, we are hopeful that further progress can be made by appropriately training teachers with larger effective capacity.

%\clearpage

% not in the page limit
\section*{Impact Statement} \label{sec:ethics}

We do not anticipate any short-term negative impact of certifiably-robust networks.
Indeed, we believe that efficient certified training techniques should be primarily leveraged towards providing guarantees to deep learning systems operating in safety-critical contexts. 
Nevertheless, we acknowledge that adversarial attacks may have social utility when deployed against unethical systems, which constitute unintended use of machine learning technologies, pointing to a potential shortcoming of provable~robustness.
Effective mitigation strategies may include targeted regulations.

%\subsubsection*{Acknowledgments}
%Use unnumbered third level headings for the acknowledgments. All
%acknowledgments, including those to funding agencies, go at the end of the paper.

\bibliography{ccibp-distillation}
\bibliographystyle{icml2026}

\clearpage
\appendix
\onecolumn
\section{Interval Bound Propagation} \label{sec:ibp}

We here provide details concerning the computations of IBP bounds omitted from $\S$\ref{sec:nn-verification}.
Assuming a feed-forward network structure, let $W^j$ and $b^j$ respectively be the weight and bias of the $j$-th layer, and let $\sigma$ denote a monotonic element-wise activation function, which, as stated in $\S$\ref{sec:background}, we assume to be the ReLU activation in this work.
Furthermore, let  $\tilde{W}^n := \left( {}_yI^{k} - I^{k} \right) W^n$ and $\tilde{\mbf{b}}^n := \left( {}_yI^{k} - I^{k} \right)\mbf{b}^n$ denote the weight and bias corresponding to the composition of the last network layer and of the difference between logits.
IBP proceeds by first computing $\lbhat^{1} =W^{1} \xb - \epsilon \left|W^{1}\right| \mathbf{1} + \bb^{1}$ and $\ubhat^{1} = W^{1} \xb + \epsilon \left|W^{1}\right| \mathbf{1} + \bb^{1}$, and then derives $\zdifflb$ by iteratively computing lower and upper bounds ($\lbhat^{j}$, and $\ubhat^{j}$, respectively) to the network pre-activations at layer $j$:
\begin{equation}
	\label{eq:ibp}
	\begin{aligned}
		&\begin{array}{r}\hspace{-2pt}\zdifflb = [\tilde{W}^n]_+\ \sigma\left(\lbhat^{n-1}\right) + [\tilde{W}^n]_-\ \sigma\left(\ubhat^{n-1}\right) + \tilde{\bb}^n,\hspace{-2pt}\end{array} \hspace{5pt} \text{where:}\\
		&
		\hspace{-15pt} \left[\hspace{-3pt}
		\begin{array}{r}
			\lbhat^{j} = [W^{j}]_+\ \sigma\left(\lbhat^{j-1}\right) + \frac{1}{2} [W^{j}]_-\ \sigma\left(\ubhat^{j-1}\right) + \bb_{j} \\
			\ubhat^{j} = [W^{j}]_+\ \sigma\left(\ubhat^{j-1}\right) + \frac{1}{2} [W^{j}]_-\ \sigma\left(\lbhat^{j-1}\right) + \bb_{j} 
		\end{array} \hspace{-3pt}\right] \forall \ j \in \left\llbracket2,n-1\right\rrbracket.
	\end{aligned}
\end{equation}

\section{Proofs of technical results} \label{sec:app-proofs}

We here provide the proofs omitted from $\S$\ref{sec:method}.

\vspace{10pt}
\begin{remark}
	Let $f : \mathbb{R}^n \rightarrow \mathbb{R}$. If $\mathcal{A} \neq \emptyset$ and $\mathcal{A} \subseteq \mathcal{B}$, then $\max_{\xb \in \mathcal{A}} f(\xb) \leq \max_{\xb \in \mathcal{B}} f(\xb)$.
	\label{prop:app-remark}
\end{remark}
\vspace{10pt}

\ubloss*
\vspace{-10pt}
\begin{proof}
	Let us start by upper bounding the worst-case distillation loss $\distillationloss$:
	\begin{equation*}
		\begin{aligned}
			\distillationloss &= 
			\max_{\xb' \in \mathcal{C}_{\epsilon}(\xb)} \left\lVert   h_{\thetab_h} (\xb') - \tlatent \right\rVert^2_2 
			= \max_{\xb' \in \mathcal{C}_{\epsilon}(\xb)} \left[\sum_i \left(   h_{\thetab_h} (\xb')_i - \tlatent_i \right)^2 \right]\\
			& \leq \sum_i \max_{\xb' \in \mathcal{C}_{\epsilon}(\xb)} \left(   h_{\thetab_h} (\xb')_i - \tlatent_i \right)^2.
		\end{aligned}
	\end{equation*}
    \looseness=-1
    Let $\mathcal{M}_i = \left\{h_{\thetab_h} (\xb')_i\ \Big|\ \xb' \in \argmax_{\xb' \in \mathcal{C}_{\epsilon}(\xb)} \left(   h_{\thetab_h} (\xb')_i - \tlatent_i \right)^2 \right\}$.
	By the definition of the IBP feature bounds $\latentdifflb$ and $\latentdiffub$: 
	\begin{equation*}
		 \mathcal{M}_i \subseteq \left[\latentdifflb_i, \latentdiffub_i\right].
	\end{equation*}
	Recalling remark~\ref{prop:app-remark}, we can hence further bound $\distillationloss$ as follows:
	\begin{equation*}
		\begin{aligned}
			\distillationloss  \leq \sum_i \max_{\xb' \in \mathcal{C}_{\epsilon}(\xb)} \left(   h_{\thetab_h} (\xb')_i - \tlatent_i \right)^2 &= \sum_i \max_{h \in \mathcal{M}_i} \left(   h - \tlatent_i \right)^2 \\ 
			& \leq \sum_i \max_{h \in \left[\latentdifflb_i, \latentdiffub_i\right]} \left( h - \tlatent_i \right)^2.
		\end{aligned}
	\end{equation*}
	Let us note that (for any $b, l, u \in \mathbb{R}$, $l \leq u$):
	\begin{equation}
		\max_{x \in \left[l, u\right]} \left( x - b \right)^2 = \max \left\{\left( l - b \right)^2, \left( u - b \right)^2 \right\}.
		\label{eq:loss-closed-form}
	\end{equation}
	We can hence write $\distillationloss \leq \distillationlossub$.
\end{proof}
\vspace{-5pt}

\ccloss*
\vspace{-10pt}
\begin{proof}	
	Let us define $\mathcal{I}^{\alpha}_i := \left[\latentdiffcclb_i, \latentdiffccub_i\right]$.
	Owing to equation~\eqref{eq:loss-closed-form}, we can write the following alternative definition of $\distillationlosscc{\alpha}$:
	\begin{equation}
		\distillationlosscc{\alpha} = \sum_i \max_{h \in \mathcal{I}^{\alpha}_i} \left( h - \tlatent_i \right)^2.
		\label{eq:distillationlosscc-alternative}
	\end{equation}
	
	Let us recall that $\latentdiffccub := (1 - \alpha)\ h_{\thetab_h} (\xb_{\text{adv}}) + \alpha\ \latentdiffub$, and \linebreak
	$\latentdiffcclb := (1 - \alpha)\ h_{\thetab_h} (\xb_{\text{adv}}) + \alpha\ \latentdifflb$.
	By pointing out that $ h_{\thetab_h} (\xb_{\text{adv}}) \in \left[\latentdifflb, \latentdiffub\right]$, we can see that, $\forall \alpha \in [0, 1]$:
	\begin{equation*}
			\latentdifflb \leq \latentdiffcclb \leq h_{\thetab_h} (\xb_{\text{adv}}), \quad
			h_{\thetab_h} (\xb_{\text{adv}}) \leq \latentdiffccub \leq \latentdiffub. 
	\end{equation*}
	Consequently, $\forall \alpha \in [0, 1]$ and $\forall i \in \left\llbracket1,m\right\rrbracket$ ($m$ being the dimensionality of the feature space):
	\begin{equation*}
		 \mathcal{I}^{0}_i  = \{h_{\thetab_h} (\xb_{\text{adv}})_i\} \subseteq \mathcal{I}^{\alpha}_i \subseteq \left[\latentdifflb_i, \latentdiffub_i\right] = \mathcal{I}^{1}_i.
	\end{equation*}
	And hence, as per remark~\ref{prop:app-remark}:
	\begin{equation}
			\sum_i \max_{h \in \mathcal{I}^{0}_i} \left( h - \tlatent_i \right)^2 \leq
			\sum_i \max_{h \in \mathcal{I}^{\alpha}_i} \left( h - \tlatent_i \right)^2 \leq \sum_i \max_{h \in \mathcal{I}^{1}_i} \left( h - \tlatent_i \right)^2.
		\label{eq:proof-inequality}
	\end{equation}
	
	Using equation~\eqref{eq:distillationlosscc-alternative} with $\alpha=0$ we can see that:
	\begin{equation*}
		\begin{aligned}
			\distillationlosscc{0} &= \sum_i \max_{h \in \mathcal{I}^{0}_i} \left( h - \tlatent_i \right)^2 = \sum_i \left( h_{\thetab_h} (\xb_{\text{adv}})_i - \tlatent_i \right)^2 \\
			& = \left\lVert   h_{\thetab_h} (\xb_{\text{adv}}) - \tlatent \right\rVert^2_2 = \distillationlosslb.
		\end{aligned}
	\end{equation*}
	Using it again with $\alpha=1$ and recalling equation~\eqref{eq:loss-closed-form}, we obtain:
	\begin{equation*}
		\begin{aligned}
			\distillationlosscc{1} &= \sum_i \max_{h \in \mathcal{I}^{1}_i} \left( h - \tlatent_i \right)^2 = \\
			&= \sum_i \max \left\{\left(\latentdifflb_i - \tlatent_i \right)^2 , \left(\latentdiffub_i - \tlatent_i \right)^2\right\} = \distillationlossub,
		\end{aligned}
	\end{equation*}
	Hence proving the first point of the proposition.
	
	The second point can be now proved by again using equation~\eqref{eq:distillationlosscc-alternative} within equation~\eqref{eq:proof-inequality}:
	\begin{equation*}
			\distillationlosslb = \distillationlosscc{0} \leq
			\distillationlosscc{\alpha} \leq \distillationlosscc{1} = \distillationlossub.
	\end{equation*}
	
	Concerning the third point, the continuity of $\distillationlosscc{\alpha}$ for $\alpha \in [0, 1]$ can be proved by pointing out that $\distillationlosscc{\alpha}$ is composed of operators preserving continuity (pointwise $\max$), linear combinations and compositions of continuous functions of $\alpha$.
	In order to prove monotonicity, note that, for any $\alpha, \alpha' \in [0, 1]$ with $\alpha \leq \alpha'$, ${\ }^{\text{CC}}\bar{h}^{\mathcal{C}_{\epsilon}}_{\thetab_h}(\alpha'; \xb) \geq {\ }^{\text{CC}}\bar{h}^{\mathcal{C}_{\epsilon}}_{\thetab_h}(\alpha; \xb)$ and ${\ }^{\text{CC}}\ubar{h}^{\mathcal{C}_{\epsilon}}_{\thetab_h}(\alpha'; \xb) \leq {\ }^{\text{CC}}\ubar{h}^{\mathcal{C}_{\epsilon}}_{\thetab_h}(\alpha; \xb)$, implying $\mathcal{I}^{\alpha}_i \subseteq \mathcal{I}^{\alpha'}_i$.
	Hence, recalling remark~\ref{prop:app-remark}, for any $\alpha, \alpha' \in [0, 1]$ with $\alpha \leq \alpha'$, we have:
	\begin{equation*}
		\distillationlosscc{\alpha} = \sum_i \max_{h \in \mathcal{I}^{\alpha}_i} \left( h - \tlatent_i \right)^2 \leq \sum_i \max_{h \in \mathcal{I}^{\alpha'}_i} \left( h - \tlatent_i \right)^2 = \distillationlosscc{\alpha'},
	\end{equation*}
	which proves that $\distillationlosscc{\alpha}$ is monotonically increasing for $\alpha \in [0, 1]$.
	% NOTE: the algebraic proof of monotonicity is extremely tedious, as it requires enumerating 4 cases. This is more coincise.
\end{proof}

\convexcombination*
\vspace{-10pt}
\begin{proof}	
	We can write the student logit differences as: $\zb_{f_{\thetab}} (\xb, y) =  \tilde{W}^{n} h_{\thetab_h} (\xb) + \tilde{\mbf{b}}^{n}$.
	Hence, given IBP bounds to the student features $\latentdifflb$ and $\latentdiffub$ (corresponding to $\sigma\left(\lbhat^{n-1}\right)$ and $\sigma\left(\ubhat^{n-1}\right)$ in equation~\eqref{eq:ibp}, respectively), we can compute the logit differences lower bounds $\zdifflb$ as per equation~\eqref{eq:ibp}:
	\begin{equation*}
		\zdifflb = [\tilde{W}^n]_+ \latentdifflb + [\tilde{W}^n]_- \latentdiffub + \tilde{\bb}^n.
	\end{equation*}
	\looseness=-1
	Let us write the adversarial logit differences $\zb_{f_{\thetab}}(\xb_{\text{adv}}, y)$ as a function of the adversarial student~features:
	\begin{equation*}
		\zb_{f_{\thetab}}(\xb_{\text{adv}}, y) = \tilde{W}^n h_{\thetab_h} (\xb_{\text{adv}})  + \tilde{\bb}^n = \left([\tilde{W}^n]_+ +  [\tilde{W}^n]_-\right) h_{\thetab_h} (\xb_{\text{adv}})  + \tilde{\bb}^n.
	\end{equation*}
	Replacing the above two equations in the definition of the CC-IBP convex combinations $\zdiffcc$ (see $\S$\ref{sec:expressive}) we get:
	\begin{equation*}
		\begin{aligned}
			&\zdiffcc = ( 1 - \alpha)\ \zb_{f_{\thetab}}(\xb_{\text{adv}}, y) + \alpha\ \zdifflb \\
			&\qquad= ( 1 - \alpha) \left[\left([\tilde{W}^n]_+ +  [\tilde{W}^n]_-\right) h_{\thetab_h} (\xb_{\text{adv}})\right] + \alpha \left( [\tilde{W}^n]_+ \latentdifflb + [\tilde{W}^n]_- \latentdiffub\right) + \tilde{\bb}^n \\
			&\qquad= [\tilde{W}^n]_+ \left[ ( 1 - \alpha) h_{\thetab_h} (\xb_{\text{adv}})  + \alpha \latentdifflb \right] + [\tilde{W}^n]_- \left[ ( 1 - \alpha) h_{\thetab_h} (\xb_{\text{adv}})  + \alpha \latentdiffub \right] + \tilde{\bb}^n.
		\end{aligned}
	\end{equation*}
	Recalling that $\latentdiffccub := (1 - \alpha)\ h_{\thetab_h} (\xb_{\text{adv}}) + \alpha\ \latentdiffub$, and \linebreak
	$\latentdiffcclb := (1 - \alpha)\ h_{\thetab_h} (\xb_{\text{adv}}) + \alpha\ \latentdifflb$, we get:
	\begin{equation*}
		\zdiffcc = [\tilde{W}^n]_+ \latentdiffcclb + [\tilde{W}^n]_- \latentdiffccub + \tilde{\bb}^n,
	\end{equation*}
	which concludes the proof.
\end{proof}

\section{Pseudo-code} \label{sec:pseudo-code}

We here provide pseudo-code for CC-Dist: see algorithm~\ref{alg:ccdist}.

\begin{minipage}{.98\textwidth}
	\begin{algorithm}[H]
		\caption{CC-Dist training loss $\mathcal{L}^{\text{CC-Dist}} (\alpha, \nicefrac{5}{w}; \xb, y)$}\label{alg:ccdist}
		\begin{algorithmic}[1]
			\STATE{\textbf{Input:} Student $f_{\thetab} = g_{\thetab_g} \circ h_{\thetab_h}$, teacher feature map $h^t_{\thetab^t_h}$, loss function $\mathcal{L}$, data point $(\xb, y)$, perturbation set $\mathcal{C}_{\epsilon}(\xb)$, hyper-parameter $\alpha$} \vspace{5pt}
			\STATE Compute $\xb_{\text{adv}}$ via an adversarial attack on $\mathcal{C}_{\epsilon}(\xb)$
			\STATE Compute $\latentdifflb$, $\latentdiffub$ and $\zdifflb$ via IBP
			\STATE $\latentdiffccub \leftarrow (1 - \alpha)\ h_{\thetab_h} (\xb_{\text{adv}}) + \alpha\ \latentdiffub$
			\STATE $\latentdiffcclb \leftarrow (1 - \alpha)\ h_{\thetab_h} (\xb_{\text{adv}}) + \alpha\ \latentdifflb$
			\STATE $\zb_{f_{\thetab}}(\xb_{\text{adv}}, y) \leftarrow \left( g_{\thetab_g}\left(h_{\thetab_h}(\xb_{\text{adv}})\right)_y \mbf{1}^k - g_{\thetab_g}\left(h_{\thetab_h}(\xb_{\text{adv}})\right)\right)$
			\STATE $\ccibp \leftarrow \mathcal{L}\left( -\left[( 1 - \alpha)\ \zb_{f_{\thetab}}(\xb_{\text{adv}}, y) + \alpha\ \zdifflb\right], y\right)$
			\STATE $\distillationlosscc{\alpha} \leftarrow  \sum_i \max \left\{
			\left(
			\latentdiffcclb_i -  \tlatent_i
			\right)^2, 
			\left(
			\latentdiffccub_i -  \tlatent_i
			\right)^2
			\right)$ \vspace{2pt}
			\STATE \textbf{Return:} $\ccibp + \nicefrac{5}{w}\ \distillationlosscc{\alpha} $
		\end{algorithmic}
	\end{algorithm}
\end{minipage}%
\section{Experimental details}  \label{sec:app-experiments-details}

This appendix provides experimental details omitted from $\S$\ref{sec:experiments}.

\subsection{Experimental setup} \label{sec:exp-setup}

Experiments were each carried out on an internal cluster using a single GPU and employing from $6$ to $12$ CPU cores.
GPUs belonging to the following models were employed: Nvidia GTX 1080 Ti, Nvidia RTX 2080 Ti, Nvidia RTX 6000, Nvidia RTX 8000, Nvidia V100.
CPU memory was capped at $10$ GB for the training experiments, and at $100$ GB for the verification experiments.
The runtime of training runs ranges from roughly 5 hours for CIFAR-10, to roughly 4 days for downscaled ImageNet.
For verification experiments, runtime ranges from below 8 hours for CIFAR-10 with $\epsilon=\nicefrac{8}{255}$ to roughly 12 days for downscaled ImageNet.

\subsection{Training setup and hyper-parameters}

We now report the details of the employed training setup and hyper-parameters.

\subsubsection{Datasets}
As stated in $\S$\ref{sec:main-exp}, the experiments were carried out on the following datasets: CIFAR-10~\citep{CIFAR10}, TinyImageNet~\citep{Le2015}, and downscaled ImageNet ($64\times64$)~\citep{Chrabaszcz2017}.
For all three, we train using random horizontal flips, random crops, and normalization as done in~\citep{DePalma2024}.
The employed dataset splits comply with previous work in the area~\citep{DePalma2024,Mao2023,Mueller2023}.
Specifically, for CIFAR-10, which consists of $32\times32$ RGB images divided into $10$ classes, we trained on the original $50,000$ training points, and evaluated on the original $10,000$ test points.
For TinyImageNet, consisting of $64\times64$ RGB images and $200$ classes, we trained on the original $100,000$ training points, and evaluated on the original $10,000$ validation images.
Finally, for downscaled ImageNet, which consists of $64\times64$ RGB images and $1000$ classes, we employed the original training set of $1,281,167$ images, and evaluated on the original $50,000$ validation images.

\subsubsection{Training schedules and implementation details} \label{sec:training-schedules}

For CC-IBP, CC-Dist models and their teachers, trained networks are initialized using the specialized scheme by~\citet{Shi2021}, which reduces the magnitude of the IBP bounds. 
Teachers for CC-Dist are trained using pure adversarial training, relying on a 10-step PGD attack~\citep{Madry2018} with step size $\eta=0.25 \epsilon$.
Complying with~\citet{DePalma2024}, $\xb_{\text{adv}}$ for CC-Dist and CC-IBP models is computed using a single-step PGD attack with uniform random initialization and step size $\eta=10.0 \epsilon$ on all setups except for CIFAR-10 with $\epsilon=\nicefrac{2}{255}$, which instead relies on a $8$-step PGD attack with step size $\eta=0.25 \epsilon'$ run on a larger effective input perturbation ($\epsilon' = 2.1 \epsilon$).
All methods use a batch size of $128$ throughout the experiments, and gradient clipping with norm equal to $10$.
We do not perform any early stopping.

\looseness=-1
As relatively common in the adversarial training literature~\citep{Andriushchenko2020,Jorge2022}, teachers for CC-Dist are trained using the SGD optimizer (with momentum set to $0.9$), shorter schedules and a cyclic learning rate, which linearly increases the learning rate from $0$ to $0.2$ during the first half of the training, and then decreases it back to $0$. As a longer number of teacher training epochs was found to be beneficial on TinyImageNet, we report it among the hyper-parameters in table~\ref{fig:hyperparameters}.
As instead common in the certified training literature, CC-Dist and CC-IBP models are trained using the Adam optimizer~\citep{Kingma2015} with longer training schedules and a learning rate of $5 \times 10^{-4}$, which is decayed twice by a factor $0.2$.
Their training starts with one epoch of ``warm-up'', where the certified training loss is replaced by the standard cross entropy on the clean inputs, and proceeds with a ``ramp-up'' phase, during which the training perturbation radius $\epsilon$ is gradually increased from $0$ to its target value, and the IBP regularization from~\citet{Shi2021} is added to the training loss (with coefficient $\lambda=0.5$ on CIFAR-10, and $\lambda=0.2$ for TinyImageNet and downscaled ImageNet).
Benchmark-specific details for CC-Dist models and CC-IBP, which are in accordance with~\citet{DePalma2024}, follow:
\begin{itemize}[leftmargin=1.3em]
	\item CIFAR-10 with $\epsilon=\nicefrac{2}{255}$: the CC-IBP and CC-Dist training schedule is $160$-epochs long, with $80$ epochs of ramp-up, and with the learning rate decayed at epochs $120$ and $140$.
	\item CIFAR-10 with $\epsilon=\nicefrac{8}{255}$: $260$ epochs, with $80$ epochs of ramp-up, and with the learning rate decayed at epochs $180$ and $220$.
	\item TinyImageNet: $160$ epochs, with $80$ ramp-up epochs, and decaying the learning rate at epochs $120$~and~$140$.
	\item Downscaled ImageNet:  $80$ epochs, with $20$ epochs of ramp-up, and decaying the learning rate at epochs $60$ and $70$.
\end{itemize}

\looseness=-1
In order to separate the effect of distillation from any other potential implementation detail, the CC-IBP evaluations from table~\ref{fig:main-results} were obtained by employing the CC-Dist implementation with $\beta=0$.

\subsubsection{Network architecture}

The employed \texttt{CNN-7} architecture is left unvaried with respect to previous work~\citep{DePalma2024,Mueller2023,Mao2023,mao2024ctbench} except on downscaled ImageNet, where we applied a small modification to the linear layers.
Specifically, in order to make sure that the feature space employed for distillation is larger than the network output space, we set the output size of the penultimate layer (and the input size of the last) equal to $1024$, instead of $512$ as for the other datasets and in~\citet{DePalma2024}.
As originally suggested by~\citet{Shi2021} to improve performance, and complying with previous work~\citep{DePalma2024,Mueller2023,Mao2023,mao2024ctbench}, we employ batch normalization (BatchNorm) after every network layer except the last.
In our implementation, adversarial attacks are carried out with the network in evaluation mode, hence using the current BatchNorm running statistics.
Except during the warm-up phase, where we also perform an evaluation on the unperturbed data points, the network is exclusively evaluated on the computed adversarial inputs $\xb_{\text{adv}}$, which hence dominate the computed running statistics for most of the training and for the final network.
Finally, except for the clean loss during warm-up, which is computed using the unperturbed current batch statistics,
all the training loss computations (including those requiring IBP bounds) are carried out using the batch statistics from the computed adversarial inputs (hence in training mode).

\subsubsection{Hyper-parameters} \label{sec:hyperparameters}

\begin{table}[t!]
%	\vspace{-7pt}
	\sisetup{detect-all = true}
	
	\centering
	
	\scriptsize
	\setlength{\tabcolsep}{4pt}
	\aboverulesep = 0.3mm  % 0.605mm
	\belowrulesep = 0.1mm  % 0.984mm
	\caption{\small 
		\looseness=-1
		Hyper-parameter settings for the CC-Dist models from table~\ref{fig:main-results}, their respective teachers, and for our CC-IBP evaluations reported in the same table.
		\label{fig:hyperparameters}}
	\vspace{5pt}
	\begin{adjustbox}{width=.70\textwidth, center}
		
		\begin{tabular}{
				c
				c
				l
				c
				c
				c
				c
			} 
			
			\toprule \\[-5pt]
			
			\multicolumn{1}{c}{\multirow{1}{*}{Dataset}} &
			\multicolumn{1}{c}{\multirow{1}{*}{$\epsilon$}} &
			\multicolumn{1}{c}{\multirow{1}{*}{Method}} &
			\multicolumn{1}{c}{$\alpha$}  &
			\multicolumn{1}{c}{$\ell_1$} & 
			\multicolumn{1}{c}{Teacher n. epochs} & 
			\multicolumn{1}{c}{Teacher $\ell_1$} 
			\\
			
			\cmidrule(lr){1-2} \cmidrule(lr){3-3} \cmidrule(lr){4-5} \cmidrule(lr){6-7} \\[-5pt]
			
			\multirow{4.5}{*}{CIFAR-10} & \multirow{2}{*}{$\frac{2}{255}$} &
			
			\multicolumn{1}{ c }{\textsc{CC-Dist}}
			& $10^{-2}$ &  $3 \times 10^{-6}$ & 30 & $10^{-5}$   \\
			
			& & \multicolumn{1}{ c }{\textsc{CC-IBP}} 
			& $10^{-2}$  &  $3 \times 10^{-6}$ & / & /\\ 
			
			\cmidrule(lr){2-7} \\[-5pt]
			
			& \multirow{2}{*}{$\frac{8}{255}$} &
			
			\multicolumn{1}{ c }{\textsc{CC-Dist}} 
			& $0.5$ &  $0$ & 30 & $5 \times 10^{-6}$  \\
			
			& & \multicolumn{1}{ c }{\textsc{CC-IBP}} 
			& $0.5$ &  $0$ & / & /  \\
			
			\cmidrule{1-7} \\[-5pt]
			
			\multirow{2}{*}{TinyImageNet} & \multirow{2}{*}{$\frac{1}{255}$} &
			
			\multicolumn{1}{ c }{\textsc{CC-Dist}} 
			& $5 \times 10^{-3}$ &   $5 \times 10^{-5}$ & 100 & $5 \times 10^{-5}$   \\
			
			& & \multicolumn{1}{ c }{\textsc{CC-IBP}}
			& $3 \times 10^{-3}$ &   $5 \times 10^{-5}$ & / & / \\
			
			\cmidrule{1-7} \\[-5pt]
			
			\multirow{2}{*}{ImageNet64} & \multirow{2}{*}{$\frac{1}{255}$} &
			
			\multicolumn{1}{ c }{\textsc{CC-Dist}} 
			&  $5 \times 10^{-3}$ &   $10^{-5}$ & 30 & $5 \times 10^{-6}$  \\ 
			
			& & \multicolumn{1}{ c }{\textsc{CC-IBP}}
			&  $5 \times 10^{-3}$ &   $10^{-5}$ & / & /  \\  
			
			\bottomrule 
			
		\end{tabular}
	\end{adjustbox}
\end{table}

Table~\ref{fig:hyperparameters} reports a list of the main method and regularization hyper-parameter values for our evaluations from table~\ref{fig:main-results}. 
As done in previous work~\citep{Gowal2019,Zhang2020,Shi2021,Mueller2023,DePalma2024,mao2024ctbench}, and hence ensuring a fair comparison in table~\ref{fig:main-results}, tuning was carried out directly on the evaluation sets.
The details of the CC-Dist and CC-IBP training schedules, taken from~\citet{DePalma2024}, are instead reported in appendix~\ref{sec:training-schedules}.

As explained in $\S$\ref{sec:ccdist}, we advocate for a constant distillation coefficient: we noticed that $\beta=\nicefrac{5}{w}$ yielded good performance across our earlier CIFAR-10 experiments, and then decided to keep it to the same value across all evaluations.
On CIFAR-10, we did not tune the $\alpha$ coefficient for either method: it was set to be the CC-IBP $\alpha$ coefficient employed in~\citet{DePalma2024}.
Complying with common practice in the certified training literature~\citep{DePalma2024,Mueller2023,mao2024ctbench}, $\ell_1$ regularization is applied on top of the employed training losses: the corresponding values for CC-Dist and CC-IBP models were not tuned but taken from the CC-IBP values reported in the expressive losses work~\citep{DePalma2024}.
On TinyImageNet and ImageNet64, we found that both the standard performance and the certified accuracy of CC-IBP could be significantly improved compared to the original results from~\citet{DePalma2024}, which respectively employ $\alpha=10^{-2}$ and $\alpha=5 \times 10^{-2}$ for CC-IBP on TinyImageNet and ImageNet64, by decreasing $\alpha$.
In particular, we selected the smallest $\alpha$ value resulting in strictly better robustness-accuracy trade-offs (i.e., before certified accuracy started decreasing with lower $\alpha$ values).
We trained a series of potential teachers for CC-Dist with varying $\ell_1$ coefficient and number of training epochs, and selected the teacher depending on the performance of the resulting CC-Dist model. 
On all datasets except TinyImageNet, where longer schedules were beneficial to CC-Dist performance, we found a teacher trained with the relatively short $30$-epoch schedule to result in strong CC-Dist performance.
Finally, generally speaking, we advocate for the re-use of the selected CC-IBP $\alpha$ coefficient for CC-Dist.
On TinyImageNet, the only setting where the employed $\alpha$ values for the reported models differ across CC-Dist and CC-IBP, we found a larger CC-Dist $\alpha$ to result in simultaneously better standard performance and certified robustness compared to all results from the literature.
Nevertheless, re-using $\alpha = 3 \times 10^{-3}$ for CC-Dist produced a model with $44.08\%$ standard accuracy and $27.40\%$ certified accuracy (compare with table~\ref{fig:main-results}), which, albeit not improving on both metrics at once compared to the MTL-IBP results from~\citep{mao2024ctbench}, significantly improves upon the overall performance trade-offs seen in the literature. 

Unless otherwise stated, all hyper-parameters throughout the experimental evaluations comply with those reported in table~\ref{fig:hyperparameters}.

\subsection{Verification setup} \label{sec:verif-setup}

\looseness=-1
The employed verification setup is analogous to the one from~\citet{DePalma2024}.
As stated in $\S$\ref{sec:experiments}, we use the open source OVAL verification framework~\citep{Bunel2018,BunelDP20,improvedbab}, which performs branch-and-bound, and a configuration based on alpha-beta-CROWN bounds~\citep{betacrown}.
Before running branch-and-bound, we try verifying the property via IBP and CROWN bounds from~\texttt{auto\_LiRPA}~\citep{Xu2020}, or to falsify it through a PGD attack~\citep{Madry2018}.
Differently from~\citet{DePalma2024}, we use a timeout of 600 seconds (as opposed to 1800 seconds). 
We perform verification by running the framework to compute $\min_i \zdiffopt_i$, where the $\min$ operator is converted into an equivalent auxiliary ReLU network to append to $\zb_{f_{\thetab}}(\xb, y)$~\citep{Bunel2020,DePalma2024}.
For TinyImageNet and downscaled ImageNet, differently from~\citet{DePalma2024}, in order to reduce the size of the auxiliary network, we exclude from the $\min$ operator all the logit differences that were already proved to be positive by the IBP or CROWN bounds computed before running branch-and-bound.
Finally, the configuration employed to verify the TinyImageNet and downscaled ImageNet teacher models (results reported in table~\ref{fig:teacher-student}) computes looser pre-activation bounds (using CROWN~\citep{Zhang2018} as opposed to 5 iterations of alpha-CROWN~\citep{xu2021fast}) at the root of branch-and-bound: this was found to be a more effective verifier for networks trained via pure adversarial training.

\subsection{Software acknowledgments and licenses}

As described above, our code relies on the OVAL verification framework to verify the models, which was released under an MIT license.
The training codebase is analogous to the one from the expressive losses work~\citep{DePalma2024}, also released under an MIT license: this was in turn based on the codebase from~\citet{Shi2021}, released under a 3-Clause BSD license.
Both above repositories, and hence our codebase, rely on the~\texttt{auto\_LiRPA}~\citep{Xu2020} framework for incomplete verification, which has a 3-Clause BSD license.
Concerning datasets: downscaled ImageNet was obtained from the ImageNet website (\url{https://www.image-net.org/download.php}), and TinyImageNet from the website of the CS231n Stanford class  (\url{http://cs231n.stanford.edu/TinyImageNet-200.zip}).
MNIST and CIFAR-10 were instead downloaded using \texttt{torchvision.datasets}~\citep{Paszke2019}.

%\vspace{50pt}
\section{Supplementary experiments} \label{sec:app-supp-experiments}

This appendix reports experimental results omitted from the main paper.

\subsection{Effect of distillation} \label{sec:distillation-effect}

Aiming to shed further light behind the effect of knowledge distillation in this context, we here present a more detailed experimental comparison between the CC-IBP and CC-Dist models from table~\ref{fig:main-results}, presenting AutoAttack~\citep{Croce2020} accuracy to measure empirical robustness, and the IBP loss to measure the ease of verifiability.
As we can conclude from table~\ref{fig:distillation-effect}, the use of knowledge distillation improves trade-offs between standard accuracy and certified robustness by:
\begin{enumerate}
	\item transferring some of the superior natural accuracy and empirical robustness of the teacher onto the student through the distillation term;
	\item preserving a large degree of verifiability through the CC-IBP term within the CC-Dist loss.
\end{enumerate}

\begin{table}[h!]
	%\vspace{-7pt}
	\sisetup{detect-all = true}
	
	\centering
	
	\scriptsize
	\setlength{\tabcolsep}{4pt}
	\aboverulesep = 0.3mm  % 0.605mm
	\belowrulesep = 0.1mm  % 0.984mm
	\caption{\small 
		Detailed analysis of the effect of distillation onto CC-IBP. %\vspace{-5pt}
		\label{fig:distillation-effect}}
	\begin{adjustbox}{width=.77\textwidth, center}
		
		\begin{tabular}{
				c
				c
				l
				S[table-format=2.2]
				S[table-format=2.2]
				S[table-format=2.2]
				S[table-format=2.3]
			} 
			
			\toprule \\[-5pt]
			
			\multicolumn{1}{c}{\multirow{1}{*}{Dataset}} &
			\multicolumn{1}{c}{\multirow{1}{*}{$\epsilon$}} &
			\multicolumn{1}{c}{\multirow{1}{*}{Method}} &
			\multicolumn{1}{c}{Standard acc. [\%]}  &
			\multicolumn{1}{c}{AA acc. [\%]} &
			\multicolumn{1}{c}{Certified acc. [\%]} &
			\multicolumn{1}{c}{IBP loss} \\
			
			\cmidrule(lr){1-2} \cmidrule(lr){3-3} \cmidrule(lr){4-7} \\[-5pt]
			
			\multirow{5}{*}{CIFAR-10} & \multirow{2}{*}{$\frac{2}{255}$} &
			
			\multicolumn{1}{ c }{\textsc{CC-Dist}}
			& 81.55 & 70.71 &  64.60 & 29.05\\
			
			& & \multicolumn{1}{ c }{\textsc{CC-IBP}} 
			& 79.51	& 68.56 & 63.50 & 27.46 \\ 
			
			\cmidrule(lr){2-7} \\[-5pt]
			
			& \multirow{2}{*}{$\frac{8}{255}$} &
			
			\multicolumn{1}{ c }{\textsc{CC-Dist}} 
			& 55.13  & 36.80 &  35.52 & 1.865\\
			
			& & \multicolumn{1}{ c }{\textsc{CC-IBP}} 
			& 54.46	& 36.59 & 35.42 & 1.862 \\
			
			\cmidrule{1-7} \\[-5pt]
			
			\multirow{2}{*}{TinyImageNet} & \multirow{2}{*}{$\frac{1}{255}$} &
			
			\multicolumn{1}{ c }{\textsc{CC-Dist}} 
			&  43.78   & 32.30 &  27.88 & 59.63 \\
			
			& & \multicolumn{1}{ c }{\textsc{CC-IBP}}
			&  41.28 & 30.12 & 26.53 & 39.39  \\
			
			\cmidrule{1-7} \\[-5pt]
			
			\multirow{2}{*}{ImageNet64} & \multirow{2}{*}{$\frac{1}{255}$} &
			
			\multicolumn{1}{ c }{\textsc{CC-Dist}} 
			&  28.17  & 17.87 &  13.96 & 50.46\\ 
			
			& & \multicolumn{1}{ c }{\textsc{CC-IBP}}
			&  25.94 & 16.51 & 13.69 & 30.66  \\  
			
			\bottomrule 
			
		\end{tabular}
	\end{adjustbox}
\end{table}

\subsection{Other special cases of the proposed distillation loss} \label{sec:distillation-special-cases}

\begin{table}[b!]
%	\vspace{-7pt}
	\sisetup{detect-all = true}
	
	\centering
	
	\scriptsize
	\setlength{\tabcolsep}{4pt}
	\aboverulesep = 0.3mm  % 0.605mm
	\belowrulesep = 0.1mm  % 0.984mm
	\caption{\small 
		\looseness=-1
		Comparison between the CC-Dist models from table~\ref{fig:main-results} and other special cases of the proposed parametrized distillation loss.%
		\label{fig:special-cases}}
	\vspace{5pt}
	\begin{adjustbox}{width=.77\textwidth, center}
		
		\begin{tabular}{
				c
				c
				l
				c
				c
				S[table-format=2.2]
				S[table-format=2.2]
			} 
			
			\toprule \\[-5pt]
			
			\multicolumn{1}{c}{Dataset} &
			\multicolumn{1}{c}{$\epsilon$} &
			\multicolumn{1}{c}{Method} &
			\multicolumn{1}{c}{$\beta$} & 
			\multicolumn{1}{c}{Teacher $\ell_1$} &
			\multicolumn{1}{c}{Standard acc. [\%]}  &
			\multicolumn{1}{c}{Certified acc. [\%]} \\
			
			\cmidrule(lr){1-2} \cmidrule(lr){3-3} \cmidrule(lr){4-5} \cmidrule(lr){6-7} \\[-5pt]
			
			\multirow{9.3}{*}{CIFAR-10} & \multirow{4}{*}{$\frac{2}{255}$} &
			
			\multicolumn{1}{ c }{\textsc{CC-Dist}} & $\nicefrac{5}{w}$ & $10^{-5}$
			& 81.55 &  64.60 \\
			
			& & \multicolumn{1}{ c }{\textsc{CC-Dist}$_0$}   & $\nicefrac{5}{w}$ & $10^{-5}$
			& 81.74  &  64.22 \\ 
			
			& & \multicolumn{1}{ c }{\textsc{CC-Dist}$_1$}  & $\nicefrac{\left(5 \times 10^{-4}\right)}{w}$ & $5 \times 10^{-5}$
			& 79.84  &  63.93 \\ 
			
			\cmidrule(lr){3-7} \\[-6pt]
			
			& & \multicolumn{1}{ c }{\textsc{CC-IBP}}  & 0 & / 
			& 79.51 & 63.50  \\ 
			
			\cmidrule(lr){2-7} \\[-5pt]
			
			& \multirow{4}{*}{$\frac{8}{255}$} &
			
			\multicolumn{1}{ c }{\textsc{CC-Dist}}  & $\nicefrac{5}{w}$ & $5 \times 10^{-6}$
			& 55.13 &  35.52 \\
			
			& & \multicolumn{1}{ c }{\textsc{CC-Dist}$_0$}  & $\nicefrac{5}{w}$ & $2 \times 10^{-5}$
			& 55.63  &  35.08 \\ 
			
			& & \multicolumn{1}{ c }{\textsc{CC-Dist}$_1$}  & $\nicefrac{5}{w}$ & $2 \times 10^{-6}$
			& 55.91  &  35.35 \\ 
			
			& & \multicolumn{1}{ c }{\textsc{CC-Dist}$_0$}  & $\nicefrac{2}{w}$ & $5 \times 10^{-6}$
			& 54.65  &  35.46 \\ 
			
			& & \multicolumn{1}{ c }{\textsc{CC-Dist}$_1$}  & $\nicefrac{2}{w}$ & $5 \times 10^{-6}$
			& 54.87  &  35.55 \\ 
			
			\cmidrule(lr){3-7} \\[-6pt]
			
			& & \multicolumn{1}{ c }{\textsc{CC-IBP}}  & 0 & / 
			& 54.46 & 35.42 \\
			
			\bottomrule 
			
		\end{tabular}
	\end{adjustbox}
\end{table}

As seen in proposition~\ref{prop:ccloss}, when choosing $\alpha \in [0,1]$, the proposed distillation loss $\distillationlosscc{\alpha}$ can continuously interpolate between lower and upper bounds to the worst-case feature-space distillation loss in equation~\eqref{eq:ideal-distillation-loss}.
While $\S$\ref{sec:ccdist} advocates for the use of the same $\alpha$ coefficient employed by CC-IBP in order to closely mirror its loss (see lemma~\ref{prop:convexcombination}), we expect other choices of the $\alpha$ coefficient to work well on selected benchmarks.
We here evaluate the behavior of two training schemes that correspond to other special cases of the proposed parametrized distillation loss.
Specifically, we consider the use of $\distillationlosscc{0}=\distillationlosslb$ and  $\distillationlosscc{1}=\distillationlossub$ on top of CC-IBP, calling the resulting training schemes CC-Dist$_0$ and CC-Dist$_1$, respectively:
\begin{equation*}
	\begin{aligned}
		&\mathcal{L}^{\text{CC-Dist}_0} (\alpha, \beta; \xb, y) := \ccibp + \beta\ \distillationlosscc{0}, \\
		&\mathcal{L}^{\text{CC-Dist}_1} (\alpha, \beta; \xb, y) := \ccibp + \beta\ \distillationlosscc{1}.
	\end{aligned}
\end{equation*}

Keeping $\ell_1$ regularization and the $\alpha$ coefficient of the CC-IBP loss fixed, we first tested different distillation coefficients $\beta$ while using the teachers employed for the CC-Dist models (see tables~\ref{fig:teacher-student}~and~\ref{fig:hyperparameters}), finding that $\beta = \nicefrac{5}{w}$ yields the largest natural accuracy improvement for both methods on $\epsilon=\nicefrac{8}{255}$, and a strong trade-off between certified robustness and standard performance for CC-Dist$_0$ on $\epsilon=\nicefrac{2}{255}$. 
We found $\beta = \nicefrac{\left(5 \times 10^{-4}\right)}{w}$ to work better for CC-Dist$_1$ on $\epsilon=\nicefrac{2}{255}$.
Then, using the above distillation coefficients, and similarly to what done for CC-Dist (see appendix~\ref{sec:hyperparameters}), we tested the two methods using teachers with varying $\ell_1$ regularization and trained for $30$ epochs.
Aiming to compare them with CC-Dist, Table~\ref{fig:special-cases} reports results for the best-performing CC-Dist$_0$ and CC-Dist$_1$ models from the above procedure that display similar performance profiles with CC-Dist.

Table~\ref{fig:special-cases} shows that both CC-Dist$_0$ and CC-Dist$_1$ work well across the considered CIFAR-10 settings. 
As we would expect considering the low employed $\alpha$ coefficient (see table~\ref{fig:hyperparameters}), CC-Dist$_0$ strictly outperforms CC-Dist$_1$ on $\epsilon=\nicefrac{2}{255}$, where both methods nevertheless improve on both the standard performance and the certified accuracy of CC-IBP.
While CC-Dist$_0$ displays performance comparable to CC-Dist in this setting, CC-Dist$_1$ performs markedly worse.
On $\epsilon=\nicefrac{8}{255}$, where the employed $\alpha$ coefficient is instead relatively large (see table~\ref{fig:hyperparameters}), CC-Dist$_1$ strictly outperforms CC-Dist$_0$ on both reported performance profiles, indicating the benefit of distilling onto feature IBP bounds in this context.
Nevertheless, both methods can produce models improving on CC-IBP for both metrics on $\epsilon=\nicefrac{8}{255}$, with the corresponding CC-Dist$_0$ model strictly outperformed by CC-Dist.

We believe that the above results further confirm the effectiveness of using empirically-robust models to improve certified training via knowledge distillation. 
As the relative performance of CC-Dist$_0$ and CC-Dist$_1$ varies depending on the experimental setting, we advocate for the use of CC-Dist as described in $\S$\ref{sec:ccdist} due to its adaptive nature.
Nevertheless, given that most considered benchmarks require relatively low $\alpha$ coefficients (see table~\ref{fig:hyperparameters}), we expect CC-Dist$_0$ to be a valid alternative to CC-Dist on TinyImageNet and downscaled ImageNet.

\subsection{Distillation onto other certified training schemes} \label{sec:sabr-distillation}

\looseness=-1
In order to showcase the wider applicability of knowledge distillation from empirically-robust teachers,  we here present results on applying such distillation process onto other certified training schemes, focusing on SABR~\citep{Mueller2023}, 
MTL-IBP~\citep{DePalma2024}, and IBP-R~\citep{DePalma2022}. While the first two algorithms are expressive losses~\citep{DePalma2024} like CC-IBP, which is the focus of this paper, IBP-R does not fit into the relative framework.
As described in $\S$\ref{sec:related}, SABR computes IBP bounds over a subset of $\mathcal{C}_{\epsilon}$, termed a ``small box''. We propose to employ a distillation loss of the form of $\distillationlosscc{1}$, yet using IBP bounds computed on the small box, calling the resulting algorithm SABR-Dist.
For MTL-IBP, which takes convex combinations between $\advloss$ and $\verloss$, we employ the same distillation loss used for CC-Dist: $\distillationlosscc{1}$. Similarly to CC-Dist, and in spite of the lack of the coupling provided by lemma~\ref{prop:ccloss}, we re-use the same $\alpha$ coefficient as for MTL-IBP. We note, nevertheless, that, for any given $\alpha$, the MTL-IBP loss upper bounds the CC-IBP loss~\citep[proposition 4.2]{DePalma2024}.
For IBP-R, we instead use $\distillationlosscc{0}$ as distillation loss (see appendix~\ref{sec:distillation-special-cases}).
Various teachers and distillation coefficients were tested for IBP-R, whereas we only varied $\beta$ and kept the same teacher as table~\ref{fig:teacher-student} for SABR and MTL-IBP. $\beta=\nicefrac{5}{w}$ was chosen for all methods. For IBP-R a teacher with $\ell_1$ coefficient of $2\times10^{-5}$ was selected.
Table~\ref{fig:sabr-distillation} shows that the distillation process successfully improves both the standard accuracy and the certified robustness of all the three considered methods, demonstrating the wider potential of distillation from adversarially-trained teachers towards certified training.

\begin{table}[t!]
%	\vspace{-7pt}
	\sisetup{detect-all = true}
	
	\centering
	
	\scriptsize
	\setlength{\tabcolsep}{4pt}
	\aboverulesep = 0.3mm  % 0.605mm
	\belowrulesep = 0.1mm  % 0.984mm
	\caption{\small 
		\looseness=-1
		Effect of distillation from empirically-robust teachers onto SABR~\citep{Mueller2023}, MTL-IBP~\citep{DePalma2024}, and IBP-R~\citep{DePalma2022}.
		\label{fig:sabr-distillation}}
%	\vspace{-5pt}
	\begin{adjustbox}{width=.7\textwidth, center}
		
		\begin{tabular}{
				c
				c
				l
				S[table-format=2.2]
				S[table-format=2.2]
				S[table-format=2.2]
			} 
			
			\toprule \\[-5pt]
			
			\multicolumn{1}{c}{Dataset} &
			\multicolumn{1}{c}{$\epsilon$} &
			\multicolumn{1}{c}{Method} &
			\multicolumn{1}{c}{Std. acc. [\%]}  &
			\multicolumn{1}{c}{PGD-40 acc. [\%]} &
 			\multicolumn{1}{c}{Cert. acc. [\%]}  \\
			
			\cmidrule(lr){1-2} \cmidrule(lr){3-3} \cmidrule(lr){4-6} \\[-5pt]
			
			\multirow{6.5}{*}{CIFAR-10} & \multirow{6.5}{*}{$\frac{2}{255}$} &
			
			\multicolumn{1}{ c }{\texttt{SABR-Dist}} 
			& 81.18	& 71.13  & 64.54 \\
			
			& & \multicolumn{1}{ c }{\texttt{SABR}}  
			& 79.55 & 69.49 & 63.94  \\ 
			
			\cmidrule(lr){3-6} \\[-5pt]
			
			& & \multicolumn{1}{ c }{\texttt{MTL-IBP-Dist}} 
			& 81.58	& 71.88  & 64.09 \\ 
			
			& & \multicolumn{1}{ c }{\texttt{MTL-IBP}} 
			& 79.71 & 69.67 & 63.16  \\ 
			
			\cmidrule(lr){3-6} \\[-5pt]
			
			& & \multicolumn{1}{ c }{\texttt{IBP-R-Dist}}
			& 81.13	& 70.28  & 61.76 \\
			
			& & \multicolumn{1}{ c }{\texttt{IBP-R}} 
			& 79.68 & 69.98 & 61.43  \\ 
			
			\bottomrule 
			
		\end{tabular}
	\end{adjustbox}
\end{table}

\subsection{Logit-based distillation} \label{sec:logit-distillation}

\begin{table}[b!]
%	\vspace{-7pt}
	\sisetup{detect-all = true}
	
	\centering
	
	\scriptsize
	\setlength{\tabcolsep}{4pt}
	\aboverulesep = 0.3mm  % 0.605mm
	\belowrulesep = 0.1mm  % 0.984mm
	\caption{\small 
		\looseness=-1
		Comparison between the logit-space distillation from equation~\eqref{eq:cc-logit-dist} and the CC-Dist models from table~\ref{fig:main-results}.%
		\label{fig:logit-based}}
%	\vspace{-5pt}
	\begin{adjustbox}{width=.57\textwidth, center}
		
		\begin{tabular}{
				c
				c
				l
				S[table-format=2.2]
				S[table-format=2.2]
			} 
			
			\toprule \\[-5pt]
			
			\multicolumn{1}{c}{\multirow{1}{*}{Dataset}} &
			\multicolumn{1}{c}{\multirow{1}{*}{$\epsilon$}} &
			\multicolumn{1}{c}{\multirow{1}{*}{Method}} &
			\multicolumn{1}{c}{Standard acc. [\%]}  &
			\multicolumn{1}{c}{Certified acc. [\%]} \\
			
			\cmidrule(lr){1-2} \cmidrule(lr){3-3} \cmidrule(lr){4-5} \\[-5pt]
			
			\multirow{4.5}{*}{CIFAR-10} & \multirow{2}{*}{$\frac{2}{255}$} &
			
			\multicolumn{1}{ c }{\textsc{CC-Dist}}
			& 81.55 &  64.60 \\
			
			& & \multicolumn{1}{ c }{\textsc{Eq.}~\eqref{eq:cc-logit-dist}} 
			& 80.61  &  63.65 \\ 
			
			\cmidrule(lr){2-5} \\[-5pt]
			
			& \multirow{2}{*}{$\frac{8}{255}$} &
			
			\multicolumn{1}{ c }{\textsc{CC-Dist}} 
			& 55.13 &  35.52 \\
			
			& & \multicolumn{1}{ c }{\textsc{Eq.}~\eqref{eq:cc-logit-dist}} 
			& 54.76 &  35.20 \\
			
			\bottomrule 
			
		\end{tabular}
	\end{adjustbox}
\end{table}

\looseness=-1
We now compare the results of our proposed distillation loss from $\S$\ref{sec:distillation} with a loss that instead seeks to directly distill onto the CC-IBP convex combinations $\zdiffcc$. 
Specifically, it employs a KL term between $\zdiffcc$ and the teacher logit differences $\zb_{t_{\thetab^t}}(\xb, y)$, resulting in the following training loss:
\begin{equation}
	\ccibp + T^2 \beta\ \text{KL}_{T} \left(-\zdiffcc, -\zb_{t_{\thetab^t}}(\xb, y)\right).
	\label{eq:cc-logit-dist}
\end{equation}
\looseness=-1
Keeping all other hyper-parameters fixed (including the teacher model) to the values reported in table~\ref{fig:hyperparameters}, we tested different distillation coefficients $\beta$, using $T=20$. 
We found that, compared to CC-IBP, the best performance profile using equation~\eqref{eq:cc-logit-dist} were obtained using $\beta = \nicefrac{\left(10^{4}\right)}{w}$ for both considered CIFAR-10 setups, whose results are reported in table~\ref{fig:logit-based}.
CC-Dist outperforms the logit-space distillation loss in both settings.
We ascribe this to the more informative content of the model features, and to the inherent difference between the training goals of the teacher and the student, which are respectively trained for empirical and certified adversarial robustness. 
Allowing the student to learn a markedly different classification head from the teacher may be beneficial in this context.

\subsection{PreActResNet18 teachers}

\begin{table}[t!]
%	\vspace{-7pt}
	\sisetup{detect-all = true}
	
	\centering
	
	\scriptsize
	\setlength{\tabcolsep}{4pt}
	\aboverulesep = 0.3mm  % 0.605mm
	\belowrulesep = 0.1mm  % 0.984mm
	\caption{\small 
		\looseness=-1
		Evaluation of the effect of PreActResNet18 (\texttt{PRN18}) teachers (tch.) on the TinyImageNet performance of CC-Dist, compared to teachers with the same architecture as the student.
		\label{fig:prn18}}
	\vspace{5pt}
	\begin{adjustbox}{width=.85\textwidth, center}
		
		\begin{tabular}{
				c
				c
				l
				S[table-format=2.2]
				S[table-format=2.2]
				S[table-format=2.2]
				S[table-format=2.2]
			} 
			
			\toprule \\[-5pt]
			
			\multicolumn{1}{c}{Dataset} &
			\multicolumn{1}{c}{$\epsilon$} &
			\multicolumn{1}{c}{Tch. arch.} &
			\multicolumn{1}{c}{Std. acc. [\%]}  &
			\multicolumn{1}{c}{Cert. acc. [\%]} &
			\multicolumn{1}{c}{Tch. std. acc. [\%]}  &
			\multicolumn{1}{c}{Tch. PGD-40 acc. [\%]} \\
			
			\cmidrule(lr){1-2} \cmidrule(lr){3-3} \cmidrule(lr){4-5} \cmidrule(lr){6-7} \\[-5pt]
			
			\multirow{2}{*}{TinyImageNet} & \multirow{2}{*}{$\frac{1}{255}$} &
			
			\multicolumn{1}{ c }{\texttt{CNN-7}} 
			& 44.08 &  27.40 & 47.18 & 36.07 \\
			
			& & \multicolumn{1}{ c }{\texttt{PRN18}}  & 43.03 & 26.09
			& 50.90 & 39.98 \\ 
			
			\bottomrule 
			
		\end{tabular}
	\end{adjustbox}
\end{table}

We now present a preliminary evaluation of the effect of employing a PreActResNet18 (\texttt{PRN18}) architecture~\citep{He2016} as teacher for CC-Dist. 
In order to ensure that the feature space of the teacher is set to the output of a ReLU, in compliance with the student model, we place the PreActResNet18 average pooling layer before the last ReLU and batch normalization (the standard PreActResNet18 architecture places it instead before the last linear layer).
Focusing on TinyImageNet, and keeping $\alpha=3 \times 10^{-3}$ and the $\ell_1$ coefficient to $5 \times 10^{-5}$ as for the CC-IBP model on this setup (see table~\ref{fig:hyperparameters}), we tested various \texttt{PRN18} teachers trained for $30$ epochs with varying $\ell_1$ regularization as CC-Dist teachers.
Table~\ref{fig:prn18} compares the performance of the ensuing CC-Dist model having the largest natural accuracy with the best model obtained through same-architecture teachers for these hyper-parameters, and the respective teachers.
CC-Dist draws no benefit from the \texttt{PRN18} teacher (which was trained with $\ell_1$ coefficient equal to $5 \times 10^{-5}$), in spite of the fact that it displays stronger performance than the \texttt{CNN-7} teacher.
While we are hopeful that better CC-Dist results could be obtained by training \texttt{PRN18} teachers for longer (the considered teachers are trained for $30$ epochs, as opposed to the $100$ epochs employed for the \texttt{CNN-7} teacher), we leave this for future work.

\subsection{Experimental variability} \label{sec:variability}

\looseness=-1
Owing to the large cost of repeatedly training and verifying certifiably-robust models (the worst-case per-image verification runtime is $600$ seconds: see appendices~\ref{sec:exp-setup}~and~\ref{sec:verif-setup}), and as common in the area~\citep{DePalma2024,Mao2023,Mueller2023,mao2024understanding,mao2024ctbench}, all the experiments were run using a single seed.
In order to provide an indication of experimental variability, table~\ref{fig:exp-variability} presents aggregated CIFAR-10 results over $4$ repetitions for CC-Dist and CC-IBP.
These include the CC-Dist and CC-IBP results reported in table~\ref{fig:main-results} and $3$ further repetitions of the associated experiment, consisting of training and the ensuing verification using branch-and-bound.
We found experimental variability to be relatively low on $\epsilon=\nicefrac{2}{255}$, and more noticeable on $\epsilon=\nicefrac{8}{255}$.
On the latter setting, distillation markedly improves the average standard accuracy while leaving certified robustness roughly unvaried. 
For $\epsilon=\nicefrac{2}{255}$, CC-Dist instead produces a significant improvement on both average metrics at once.
In both cases, the cumulative (signed) improvement across the two averaged metrics is similar to the one reported in table~\ref{fig:main-results}.
\begin{table}[b!]
%	\vspace{-7pt}
	\sisetup{detect-all = true}
	
	\centering
	
	\scriptsize
	\setlength{\tabcolsep}{4pt}
	\aboverulesep = 0.1mm  % 0.605mm
	\belowrulesep = 0.2mm  % 0.984mm
	\caption{\small Experimental variability of CC-Dist and CC-IBP on CIFAR-10: maximal and minimal values, the mean and its standard error (SEM) across $4$ repetitions are reported.
		\label{fig:exp-variability}}
	\vspace{5pt}
	\begin{adjustbox}{min width=.75\textwidth, center}
		
		\begin{tabular}{
				c
				c
				c
				S[table-format=2.2]
				S[table-format=2.2]
				S[table-format=2.2]
				S[table-format=2.2]
				S[table-format=2.2]
				S[table-format=2.2]
				S[table-format=2.2]
				S[table-format=2.2]
			} 
			
			\toprule \\[-5pt]
			
			\multicolumn{1}{c}{\multirow{2}{*}{Dataset}} &
			\multicolumn{1}{c}{\multirow{2}{*}{$\epsilon$}} &
			\multicolumn{1}{c}{\multirow{2}{*}{Method}} &
			\multicolumn{4}{c}{Standard acc. [\%]} &
			\multicolumn{4}{c}{Certified acc. [\%]} \\
			
			\\[-5pt]
			
			& & & \multicolumn{1}{c}{Mean}  & \multicolumn{1}{c}{SEM} & \multicolumn{1}{c}{Max} & \multicolumn{1}{c}{Min} & \multicolumn{1}{c}{Mean}  & \multicolumn{1}{c}{SEM} & \multicolumn{1}{c}{Max} & \multicolumn{1}{c}{Min} \\
			
			\cmidrule(lr){1-2} \cmidrule(lr){3-3} \cmidrule(lr){4-7} \cmidrule(lr){8-11} \\[-5pt]
			
			\multirow{4.5}{*}{CIFAR-10} & \multirow{2}{*}{$\frac{2}{255}$} &
			
			\multicolumn{1}{ c }{\textsc{CC-Dist}}
			& 81.58 & 0.08 & 81.72 & 81.38 & 64.46 & 00.14  & 64.61 & 64.03 \\
			
			& & \multicolumn{1}{ c }{\textsc{CC-IBP}}
			& 79.68 & 0.08 & 79.90 & 79.51 & 63.41 & 0.10 & 63.65 & 63.21 \\
			
			\cmidrule(lr){2-11} \\[-5pt]
			
			&  \multirow{2}{*}{$\frac{8}{255}$} &
			
			\multicolumn{1}{ c }{\textsc{CC-Dist}}
			& 55.22 & 0.07 & 55.40 & 55.09 & 34.88 & 0.24 & 35.52  & 34.40 \\
			
			& & \multicolumn{1}{ c }{\textsc{CC-IBP}}
			& 54.12 & 0.19 & 54.62 & 53.73 & 34.89 & 0.23 & 35.42 & 34.34 \\
			
			\bottomrule 
			
			\\[-5pt]			
			
		\end{tabular}
	\end{adjustbox}
	\vspace{5pt}
\end{table} 

\subsection{Runtime measurements}

\looseness=-1
In order to assess the training overhead associated with our distillation scheme, we here present runtime measurements and estimates for CC-Dist and CC-IBP.
Specifically, we provide the training runtime of both methods, separately including also the teacher training runtime for CC-Dist, and estimates of the verification runtimes for the trained models from table~\ref{fig:main-results}.
These experiments were carried out using an Nvidia RTX 8000 GPU, and 6 cores of an AMD EPIC 7302 CPU.
Table~\ref{fig:runtime} shows that, under the training schedules of appendix~\ref{sec:training-schedules} and when training teachers for $30$ epochs as per table~\ref{sec:hyperparameters}, CC-Dist is associated with minimal training overhead on the considered CIFAR-10 settings.
This overhead increases to respectively almost $70\%$ and $40\%$ for TinyImageNet and downscaled ImageNet, where stronger teachers are required in order to maximize performance.
Finally, the increased certified accuracy of the CC-Dist models (see table~\ref{fig:main-results}) comes with an increased verification runtime, as expected from the increases IBP loss resulting from the distillation process (see table~\ref{fig:distillation-effect}).

\begin{table}[h!]
%	\vspace{-7pt}
	\sisetup{
		detect-all = true,
		exponent-mode = scientific,
		round-mode = figures,
		round-precision = 4
	}
	
	\centering
	
	\scriptsize
	\setlength{\tabcolsep}{4pt}
	\aboverulesep = 0.3mm  % 0.605mm
	\belowrulesep = 0.1mm  % 0.984mm
	\caption{\small 
		\looseness=-1
		Training runtime measurements for CC-Dist, their respective teachers (trained for $30$ epochs), and CC-IBP under the training schedules of appendix~\ref{sec:training-schedules}.%
		\label{fig:runtime}}
	\vspace{5pt}
	\begin{adjustbox}{width=.9\textwidth, center}
		
		\begin{tabular}{
				c
				c
				l
				S[table-format=1.3e1]
				S[table-format=1.3e1]
				S[table-format=1.3e1]
			} 
			
			\toprule \\[-5pt]
			
			\multicolumn{1}{c}{\multirow{1}{*}{Dataset}} &
			\multicolumn{1}{c}{\multirow{1}{*}{$\epsilon$}} &
			\multicolumn{1}{c}{\multirow{1}{*}{Method}} &
			\multicolumn{1}{c}{Training runtime [s]} &
			\multicolumn{1}{c}{Teacher training runtime [s]} &
			\multicolumn{1}{c}{Estimated$^\dagger$ verification runtime [s]}
			\\
			
			\cmidrule(lr){1-2} \cmidrule(lr){3-3} \cmidrule(lr){4-6} \\[-5pt]
			
			\multirow{4.5}{*}{CIFAR-10} & \multirow{2}{*}{$\frac{2}{255}$} &
			
			\multicolumn{1}{ c }{\textsc{CC-Dist}}
			& 16045.41909 &  2224.7256 &  109716.254 \\
			
			& & \multicolumn{1}{ c }{\textsc{CC-IBP}} 
			& 15678.20053  &  / & 70889.845 \\ 
			
			\cmidrule(lr){2-6} \\[-5pt]
			
			& \multirow{2}{*}{$\frac{8}{255}$} &
			
			\multicolumn{1}{ c }{\textsc{CC-Dist}} 
			& 16531.98289  &  2223.14948 & 10900.3978 \\
			
			& & \multicolumn{1}{ c }{\textsc{CC-IBP}} 
			& 16002.23646 &  / & 6238.3524 \\
			
			\cmidrule(lr){1-6} \\[-5pt]
			
			\multirow{2}{*}{TinyImageNet} & \multirow{2}{*}{$\frac{1}{255}$} &
			
			\multicolumn{1}{ c }{\textsc{CC-Dist}}
			& 54316.04395  &  34702.53902  & 277752.9068 \\  
			
			& & \multicolumn{1}{ c }{\textsc{CC-IBP}} 
			& 52462.51405  &  / & 160825.3278 \\ 
			
			\cmidrule(lr){1-6} \\[-5pt]
			
			\multirow{2}{*}{ImageNet64} & \multirow{2}{*}{$\frac{1}{255}$} &
			
			\multicolumn{1}{ c }{\textsc{CC-Dist}}
			& 322843.10673  &  108904.12346 & 1210079.687 \\
			
			& & \multicolumn{1}{ c }{\textsc{CC-IBP}} 
			&  309470.16224 &  / &  779384.0 \\ 
			
			\bottomrule 
			
			\\[-5pt]
			\multicolumn{6}{ l }{
				\shortstack[l]{$^\dagger$Extrapolated from measurements over the first $500$ test images.} 
			}
			
		\end{tabular}
	\end{adjustbox}
\end{table}

\begin{table}[b!]
%	\vspace{-7pt}
	\sisetup{detect-all = true}
			
	\centering
			
	\scriptsize
	\setlength{\tabcolsep}{4pt}
	\aboverulesep = 0.3mm  % 0.605mm
	\belowrulesep = 0.1mm  % 0.984mm
	\caption{\small 
		\looseness=-1
		Effect of distillation beyond \texttt{CNN-7} students on CIFAR-10 with $\epsilon=\nicefrac{2}{255}$. \label{fig:diff-architectures}}
%	\vspace{-5pt}
		\begin{subtable}{.49\textwidth}
			\captionsetup{justification=centering, labelsep=space, margin=6pt,labelformat=parens}
			\caption{\small 
				\looseness=-1
				Modified \texttt{CNN-7} with tanh activation functions. \label{fig:tanh}}
%			\vspace{5pt}
		\begin{adjustbox}{width=.8\textwidth, center}
				\begin{tabular}{
						l
						S[table-format=2.2]
						S[table-format=2.2]
					} 
					\toprule \\[-5pt]
					\multicolumn{1}{c}{Method} &
					\multicolumn{1}{c}{Std. acc. [\%]}  &
					\multicolumn{1}{c}{CROWN acc. [\%]}  \\
					\cmidrule(lr){1-1} \cmidrule(lr){2-3} \\[-5pt]
					\multicolumn{1}{ c }{\texttt{CC-Dist}} 
					& 72.38 & 49.42 \\
					\multicolumn{1}{ c }{\texttt{CC-IBP}}  
						& 71.29 & 47.39  \\ 
					\bottomrule 
				\end{tabular}
		\end{adjustbox}
		\end{subtable}
		\begin{subtable}{.49\textwidth}
			\captionsetup{justification=centering, labelsep=space, margin=6pt,labelformat=parens}
			\caption{\small 
				\looseness=-1
				PreActResNet18 architecture. \label{fig:prn18-both}}
%			\vspace{5pt}
			\begin{adjustbox}{width=.8\textwidth, center}
				
					\begin{tabular}{
							l
							S[table-format=2.2]
							S[table-format=2.2]
						} 
						\toprule \\[-5pt]
						\multicolumn{1}{c}{Method} &
						\multicolumn{1}{c}{Std. acc. [\%]}  &
						\multicolumn{1}{c}{CROWN acc. [\%]}  \\
						\cmidrule(lr){1-1} \cmidrule(lr){2-3} \\[-5pt]
						\multicolumn{1}{ c }{\texttt{CC-Dist}} 
						& 74.01 & 53.63 \\
						\multicolumn{1}{ c }{\texttt{CC-IBP}}  
						& 73.40 & 52.88  \\ 
						\bottomrule 
					\end{tabular}
			\end{adjustbox}
		\end{subtable}
\end{table}

	\subsection{Different architecture and activation function} \label{sec:app-diff-architectures}
	The main results focus on the \texttt{CNN-7} architecture owing to its state-of-the-art performance and prevalence in the relevant literature~\citep{DePalma2024,Mao2023,Mueller2023,Shi2021}.
	We here investigate whether the distillation process is beneficial beyond this context by studying the relative performance of CC-Dist on a different activation function and on a different architecture for both the teacher and the model, focussing on CIFAR-10 with $\epsilon=\nicefrac{2}{255}$.
	For the activation experiment, we modify \texttt{CNN-7} to employ the hyperbolic tangent (tanh), testing different teachers trained for $100$ epochs (choosing a teacher $\ell_1$ coefficient of $2\times10^{-5}$), and varying $\beta$ values (settling for $\beta=\nicefrac{5}{w}$ as for the experiments in table~\ref{fig:main-results}).
	For the architecture experiment, we use PreActResNet18 (\texttt{PRN18}) for both the teacher and the student, testing different teachers trained for $30$ epochs (settling on a teacher with $\ell_1$ coefficient of $5\times10^{-6}$), and using $\beta=\nicefrac{5}{w}$.
	In both cases, owing to the lack of support for either model from the OVAL branch-and-bound framework~\citep{Bunel2018,BunelDP20,improvedbab} employed throughout the paper, we use CROWN~\citep{Zhang2018} from~\texttt{auto\_LiRPA}~\citep{Xu2020} as post-training verification algorithm.
	Table~\ref{fig:diff-architectures} shows that CC-Dist successfully improves robustness-accuracy trade-offs for both the considered settings, demonstrating the wider applicability of the proposed distillation technique. 
	\subsection{Distillation on earlier latents} \label{sec:app-earlier-latents}
	Throughout the paper, we define the feature space as the activations before last affine network layer, in accordance with the conditions of lemma~\ref{prop:ccloss}. We here investigate the effect of computing on the distillation loss $\distillationlosscc{\alpha}$ on an earlier feature space, corresponding to the activations before the penultimate affine layer of \texttt{CNN-7}.
	In particular, we focus on CIFAR-10 with $\epsilon=\nicefrac{2}{255}$, keeping the teacher model fixed to the one used for tables~\ref{fig:main-results}~and~\ref{fig:teacher-student} and varying the $\beta$ coefficient to account for the change in the feature space: we found $\beta=\nicefrac{2}{w}$ to yield the maximize performance in this context.
	Table~\ref{fig:earlier-latent-distillation} shows that distillation for the last affine layer yields strictly better robustness-accuracy trade-offs than those presented in table~\ref{sec:main-exp}. 
	As distilling on earlier layers implies that a larger portion of the network is exclusively trained using the CC-IBP loss, we ascribe this to insufficient teacher-student coupling.
	\subsection{Clean teachers} \label{sec:app-clean-teachers}
	In spite of our focus on learning from adversarially-trained teachers, the proposed distillation loss $\distillationlosscc{\alpha}$ only makes use of the clean features of the teacher model.
	In order to investigate whether the empirical robustness of the teacher is indeed necessary for an effective distillation process, we here study the effect of employing standard-trained teachers.
	In particular, focusing on CIFAR-10 with $\epsilon=\nicefrac{2}{255}$, we keep $\beta=\nicefrac{2}{w}$ and test the use of SGD-trained teachers trained with varying $\ell_1$ regularization coefficients ($5\times10^{-6}$ being the chosen $\ell_1$ coefficient).
	Table~\ref{fig:clean-teachers} shows that the use of standard teachers markedly worsens the certified accuracy of the student, which is now inferior to the one associated to CC-IBP (see table~\ref{fig:main-results}).
	These results demonstrate that a robust teacher representation is a key requirement for the success of the distillation process.

	\section{Details on the employed notation} \label{sec:app-notation}
	We here provide additional details on the employed notation.
	\paragraph{Student and teacher models}
	Input-to-logits maps are denoted using the letter $f$, defined as the composition of a feature map, denoted using the letter $h$, and classification heads, denoted using the letter $g$.
	We employ a subscript to denote the parameters of these (sub-)networks, here written as a function of their inputs only. 
	Network parameters, denoted $\thetab$ throughout this work, are subscripted to denote the subsets of $\thetab$ corresponding to the feature map $\thetab_h$, and to the classification head, $\thetab_g$, respectively.
	The (sub-)networks for the teacher model, and their parameters, are denoted by the use of the $t$ superscript (e.g, $\thetab_h^t$ or $h^t_{\thetab_h^t}$).
	\paragraph{Bounds to worst-case quantities}
	The worst-case classification loss, equation~\eqref{eq:worst-case}, and the worst-case distillation loss, equation~\eqref{eq:ideal-distillation-loss}, are superscripted by the relative local perturbation set $\mathcal{C}_\epsilon$ around the input $\xb$.
    Lower and upper bounds to both quantities are denoted through lower and upper bars, respectively (e.g., $\advloss$), with the superscript preserved to stress their local validity.
    A similar notation is employed for the local bounds to the worst-case logit differences (e.g, $\zdifflb$, which lower bounds equation~\ref{eq:verification}.
    Convex combinations between lower and upper bounds, parametrized by $\alpha$, are denoted by left-superscript $CC$ standing for Convex Combination (e.g., $\distillationlosscc{\alpha}$), without any bars.
    For all these worst-case quantities, lower bounds corresponding to evaluations on concrete inputs from adversarial attacks, and the upper bounds are obtained through network convex relaxations (specifically, IBP).
   	\paragraph{Bounds to the student features}
    We again employ a similar notation when bounding the student features $h_{\thetab_h} (\xb) $, defined in proposition~\ref{prop:ubloss}.
    In this context, both the lower and the upper bounds are obtained through IBP.
    Convex combinations between the adversarial student latents $h_{\thetab_h} (\xb_{\text{adv}})$ and the IBP lower and upper bounds, as defined in equation~\ref{eq:latentdiffcc}, are denoted by the left-superscript $CC$ and lower and upper bars, respectively (e.g., $\latentdiffccub$).

\begin{table}[t!]
%	\vspace{-7pt}
	\sisetup{detect-all = true}
	
	\centering
	
	\scriptsize
	\setlength{\tabcolsep}{4pt}
	\aboverulesep = 0.3mm  % 0.605mm
	\belowrulesep = 0.1mm  % 0.984mm
	\caption{\small 
		\looseness=-1
		Effect of computing the distillation loss $\distillationlosscc{\alpha}$ on an earlier feature space.
		\label{fig:earlier-latent-distillation}}
%	\vspace{-5pt}
	\begin{adjustbox}{width=.55\textwidth, center}
			\begin{tabular}{
					c
					c
					l
					S[table-format=2.2]
					S[table-format=2.2]
				} 
				\toprule \\[-5pt]
				\multicolumn{1}{c}{Dataset} &
				\multicolumn{1}{c}{$\epsilon$} &
				\multicolumn{1}{c}{Feature space} &
				\multicolumn{1}{c}{Std. acc. [\%]}  &
				\multicolumn{1}{c}{Cert. acc. [\%]}  \\
				\cmidrule(lr){1-2} \cmidrule(lr){3-3} \cmidrule(lr){4-5} \\[-5pt]
				\multirow{2}{*}{CIFAR-10} & \multirow{2}{*}{$\frac{2}{255}$} &
				\multicolumn{1}{ c }{\texttt{Last}} 
				& 81.55 & 64.60 \\
				& & \multicolumn{1}{ c }{\texttt{Penultimate}}  
				& 80.84 & 63.11  \\ 
				\bottomrule 
			\end{tabular}
	\end{adjustbox}
\end{table}

\begin{table}[b!]
%	\vspace{-7pt}
	\sisetup{detect-all = true}
	
	\centering
	
	\scriptsize
	\setlength{\tabcolsep}{4pt}
	\aboverulesep = 0.3mm  % 0.605mm
	\belowrulesep = 0.1mm  % 0.984mm
	\caption{\small 
		\looseness=-1
		Effect of performing distillation from clean teachers.
		\label{fig:clean-teachers}}
%	\vspace{-5pt}
	\begin{adjustbox}{width=.55\textwidth, center}
			\begin{tabular}{
					c
					c
					l
					S[table-format=2.2]
					S[table-format=2.2]
				} 
				\toprule \\[-5pt]
				\multicolumn{1}{c}{Dataset} &
				\multicolumn{1}{c}{$\epsilon$} &
				\multicolumn{1}{c}{Teacher training} &
				\multicolumn{1}{c}{Std. acc. [\%]}  &
				\multicolumn{1}{c}{Cert. acc. [\%]}  \\
				\cmidrule(lr){1-2} \cmidrule(lr){3-3} \cmidrule(lr){4-5} \\[-5pt]
				\multirow{2}{*}{CIFAR-10} & \multirow{2}{*}{$\frac{2}{255}$} &
				\multicolumn{1}{ c }{\texttt{PGD-10}} 
				& 81.55 & 64.60 \\
				& & \multicolumn{1}{ c }{\texttt{Standard}}  
				& 81.48 & 62.52  \\ 
				\bottomrule 
			\end{tabular}
	\end{adjustbox}
\end{table}

\clearpage

\end{document}